  \newlength{\defbaselineskip}
\newcites{sec}{Secondary Literature}
\patchcmd{\thebibliography}{\section*{\refname}}{}{}{}
\definecolor{blue-violet}{rgb}{0.54, 0.17, 0.89}
\newcommand{\crcchange}[1]{#1}
\theoremstyle{definition}
\declaretheorem{definition}
\theoremstyle{plain}
\declaretheorem{lemma}
\declaretheorem{theorem}
\declaretheorem{corollary}
\declaretheorem{statement}
\newcommand{\R}{\mathbb{R}}
\newcommand{\Abs}[1]{\left| #1 \right| }
\newcommand{\Prob}[2][]{\underset{#1}{\mathbf{P}}\left( \hiderel{#2} \right) }
\newcommand{\Probc}[2]{\mathbf{P}\left( \hiderel{#1} \middle | \hiderel{#2} \right) }
\newcommand{\Exv}[1]{\mathbf{E}\left[ #1 \right]}
\newcommand{\Exvc}[2]{\mathbf{E}\left[ #1 \middle | \hiderel{#2} \right]}
\newcommand{\Exvsmall}[1]{\mathbf{E}[ #1 ]}
\newcommand{\Exvcsmall}[2]{\mathbf{E}[ #1 | \hiderel{#2} ]}
\newcommand{\normfsmall}[1]{\| #1 \|_F}
\newcommand{\normsmall}[1]{\| #1 \|}
\newcommand{\F}{\mathcal{F}}
\newcommand{\normf}[1]{\left\| #1 \right\|_F}
\newcommand{\norm}[1]{\left\| #1 \right\|}
\newcommand{\buckwild}{\textsc{Buckwild!}\xspace}
\newcommand{\hogwild}{\textsc{Hogwild!}\xspace}
\DeclareMathOperator*{\plog}{log}
\title{Taming the Wild: A Unified Analysis of \hogwild-Style Algorithms}
  \author{
  Christopher De Sa$^\dagger$,
  Ce Zhang$^\ddagger$,
  Kunle Olukotun$^\dagger$, and
  Christopher R{\'e}$^\dagger$ \\
  $\dagger$ Stanford University,
  $\ddagger$ University of Wisconsin-Madison \\
  \texttt{cdesa@stanford.edu},
  \texttt{czhang@cs.wisc.edu}, \\
  \texttt{kunle@stanford.edu},
  \texttt{chrismre@stanford.edu}
  }
  \author{
  Christopher De Sa,
  Ce Zhang,
  Kunle Olukotun, and
  Christopher R{\'e} \\
  \texttt{cdesa@stanford.edu},
  \texttt{czhang@cs.wisc.edu}, \\
  \texttt{kunle@stanford.edu},
  \texttt{chrismre@stanford.edu} \\
  Departments of Electrical Engineering and Computer Science\\
  Stanford University, Stanford, CA 94309
  }
\begin{document}

\maketitle

\begin{abstract}
Stochastic gradient descent (SGD) is a ubiquitous algorithm for a
variety of machine learning problems. Researchers and industry have
developed several techniques to optimize SGD's runtime performance,
including asynchronous execution and reduced precision. Our main
result is a martingale-based analysis that enables us to capture 
the rich
noise models that may arise from such techniques. Specifically, we use
our new analysis in three ways: (1) we derive convergence rates for
the convex case (\hogwild) with relaxed assumptions on the sparsity of
the problem; (2) we analyze asynchronous SGD algorithms for
non-convex matrix problems including matrix completion; and (3) we
design and analyze an asynchronous SGD algorithm, called \buckwild,
that uses lower-precision arithmetic. We show experimentally that our
algorithms run efficiently for a variety of problems on modern
hardware.
\end{abstract}

\section{Introduction}
Many problems in machine learning can be written as a
stochastic optimization problem
\[
  \begin{array}{llll}
    \mbox{minimize} & \Exvsmall{\tilde f(x)} &
    \mbox{over} & x \in \R^n,
  \end{array}
\]
where $\tilde f$ is a random objective function.  One popular method
to solve this is with stochastic gradient descent (SGD), an iterative
method which, at each timestep $t$, chooses a random objective sample
$\tilde f_t$ and updates
\begin{equation}
  \label{eqnSGDupdate}
  x_{t+1} = x_t - \alpha \nabla \tilde f_t(x_t),
\end{equation}
where $\alpha$ is the step size.  For most problems, this update step
is easy to compute, and perhaps because of this SGD is a ubiquitous
algorithm with a wide range of applications in machine
learning~\cite{bottou2010large}, including neural network
backpropagation
\cite{lecun-98x,bottou-bousquet-2008,bottou2012stochastic},
recommendation systems \cite{parambath2013matrix,Gupta:2013}, and
optimization~\cite{rakhlin2011making}.  For non-convex problems, SGD
is popular---in particular, it is widely used in deep learning---but its
success is poorly understood theoretically.

Given SGD's success in industry, practitioners have developed methods to
speed up its computation. One popular method to
speed up SGD and related algorithms is using asynchronous execution.
In an asynchronous algorithm, such as
\hogwild~\cite{recht2011hogwild}, multiple threads run an update rule
such as Equation~\ref{eqnSGDupdate} in parallel without
locks. \hogwild and other lock-free algorithms have been applied to a
variety of uses, including PageRank approximations
(FrogWild!~\cite{FrogWild}), deep learning (Dogwild!~\cite{DogWild})
and recommender systems~\cite{yu2012scalable}. 
\crcchange{Many asynchronous
versions of other stochastic algorithms have been individually analyzed, such
as stochastic coordinate
descent (SGD)~\cite{liu2013asynchronous,liu2015asynchronous} and
accelerated parallel proximal coordinate descent
(APPROX)~\cite{fercoq2013accelerated}, producing rate
results that are similar to those of \hogwild
Recently, \citet{gupta2015deep} gave an empirical analysis of the effects
of a low-precision variant of SGD on neural network training.
Other} variants of stochastic algorithms have been
proposed~\cite{konecny2014s2cd,tao2012stochastic,johansson2009randomized,
duchi2012randomized,richtarik2012parallel,tappenden2015complexity};
only a fraction of these algorithms have been analyzed in the 
asynchronous case.
Unfortunately, a new variant of SGD (or a related algorithm) may
violate the assumptions of existing analysis, and hence there are
gaps in our understanding of these techniques.

One approach to filling this gap is to analyze each purpose-built
extension from scratch: an entirely new model for each type of
asynchrony, each type of precision, etc. In a practical sense,
this may be unavoidable, but ideally there would be a single technique
that could analyze many models. In this vein, we prove a
martingale-based result that enables us to treat many different
extensions as different forms of noise within a unified 
model. We demonstrate our technique with three results:
\begin{enumerate}
  \item For the convex case, \hogwild requires strict sparsity
    assumptions. Using our techniques, we are able to relax these
    assumptions and still derive convergence rates. Moreover, under
    \hogwild's stricter assumptions, we recover the previous
    convergence rates.
  \item We derive convergence results for an asynchronous SGD
    algorithm for a non-convex matrix completion problem. We derive
    the first rates for asynchronous SGD following the recent
    (synchronous) non-convex SGD work of \citet{desa2014global}.
  \item We derive convergence rates in the presence of
    quantization errors such as those introduced by fixed-point
    arithmetic. We validate our results experimentally, and show
    that \buckwild{} can achieve speedups of up to 
    $2.3 \times$ over \hogwild-based algorithms
    for logistic regression.
\end{enumerate}
One can combine these different methods both theoretically and
empirically. We begin with our main result, which describes our 
martingale-based approach and our model.

\section{Main Result}
Analyzing asynchronous algorithms is challenging because, unlike in the
sequential case where there is a single copy of the iterate $x$, in the
asynchronous case each core has a separate copy of $x$ in its own cache.
Writes from one core may take some time to be propagated to another core's
copy of $x$, which results in race conditions where stale data is used to
compute the gradient updates.  This difficulty is compounded in the non-convex
case, where a series of unlucky random events---bad initialization,
inauspicious steps, and race conditions---can cause the algorithm to get stuck
near a saddle point or in a local minimum.  

Broadly, we analyze algorithms that repeatedly update $x$ by 
running an update step
\begin{equation}
  \label{eqnGeneralUpdate}
  x_{t+1} = x_t - \tilde G_t(x_t),
\end{equation}
for some i.i.d. update function $\tilde G_t$.
For example, for SGD, we would have $G(x) = \alpha \nabla \tilde f_t(x)$.
The goal of the algorithm must be to produce an iterate in some
\emph{success region} $S$---for example, a ball centered at the optimum $x^*$.
For any $T$, after running the algorithm for $T$ timesteps, we say that
the algorithm has \emph{succeeded} if $x_t \in S$ for some
$t \le T$; otherwise, we say that the algorithm has \emph{failed}, and we
denote this failure event as $F_T$.

Our main result is a technique that allows us to bound the convergence rates
of asynchronous SGD and related algorithms, even for some non-convex problems.
We use martingale methods, which have produced elegant
convergence rate results for both convex and some
non-convex~\cite{desa2014global} algorithms.  Martingales enable us to
model multiple forms of error---for example, from stochastic sampling, random
initialization, and asynchronous delays---within a single statistical model.
Compared to standard techniques, they also allow us to analyze algorithms that
sometimes get stuck, which is useful for non-convex problems.
Our core contribution is that
a martingale-based proof for the convergence of a sequential stochastic
algorithm can be easily modified 
to give a convergence rate for an asynchronous version.

A \emph{supermartingale}~\cite{fleming1991} is a stochastic process $W_t$ such
that $\Exvcsmall{W_{t+1}}{W_t} \le W_t$.  That is, the expected value is
\crcchange{non-increasing} over time.
A martingale-based proof of convergence for the sequential version of this
algorithm must construct a supermartingale
$W_t(x_t, x_{t-1}, \ldots, x_0)$ that is a 
function of both the time and the current and past iterates; this function
informally represents how unhappy we are with the current state of the
algorithm.  Typically, it will have the following properties.
\begin{definition}
  For a stochastic algorithm as described above,
  a non-negative process $W_t: \R^{n \times t} \rightarrow \R$ is a
  \emph{rate supermartingale}
  with horizon $B$ if the following conditions are true.
  First, it must be a supermartingale; that is,
  for any sequence $x_t, \ldots, x_0$ and any $t \le B$,
  \begin{equation}
    \label{eqnBoundedW}
    \Exvsmall{W_{t+1}(x_t - \tilde G_t(x_t), x_t, \ldots, x_0)}
    \le
    W_t(x_t, x_{t-1}, \ldots, x_0).
  \end{equation}
  Second, for all times $T \le B$ and for any sequence $x_T, \ldots, x_0$,
  if the algorithm has not succeeded by time $T$ (that is, $x_t \notin S$
  for all $t < T$), it must hold that
  \begin{equation}
    \label{eqnBoundedTime}
    W_T(x_T, x_{T-1}, \ldots, x_0) \ge T.
  \end{equation}
  This represents the fact that we are unhappy with running
  for many iterations without success.
\end{definition}
Using this, we can easily bound the convergence rate of 
the sequential version of the algorithm.
\begin{statement}
  \label{stmtSequential}
  Assume that we run a sequential stochastic algorithm, for which $W$ is a
  \emph{rate supermartingale}.  For any $T \le B$, the probability
  that the algorithm has not succeeded by time $T$ is
  \[
    \Prob{F_T} \le \frac{\Exvsmall{W_0(x_0)}}{T}.
  \]
\end{statement}
\begin{proof}
  In what follows, we let $W_t$ denote the actual value taken on by the
  function in a process defined by (\ref{eqnGeneralUpdate}).  That is,
  $W_t = W_t(x_t, x_{t-1}, \ldots, x_0)$.
  By applying (\ref{eqnBoundedW}) recursively, for any $T$,
  \[
    \Exvsmall{W_T} \le \Exvsmall{W_0} = \Exvsmall{W_0(x_0)}.
  \]
  By the law of total expectation applied to the failure event $F_T$,
  \[
    \Exvsmall{W_0(x_0)}
    \ge
    \Exvsmall{W_T}
    = 
    \Prob{F_T} \Exvcsmall{W_T}{F_T}
    +
    \Prob{\lnot F_T} \Exvcsmall{W_T}{\lnot F_T}.
  \]
  Applying (\ref{eqnBoundedTime}), i.e. $\Exvcsmall{W_T}{F_T} \ge T$,
  and recalling that $W$ is nonnegative results in
  \[
    \Exvsmall{W_0(x_0)}
    \ge
    \Prob{F_T} T;
  \]
  rearranging terms produces the result in Statement \ref{stmtSequential}.
\end{proof}
This technique is very general; in subsequent sections we
show that rate supermartingales can be constructed for SGD on all convex
problems and for some algorithms for non-convex problems.

\subsection{Modeling Asynchronicity}
\label{ssHardwareModel}
The behavior of an asynchronous SGD algorithm depends both on the
problem it is trying to solve and on the hardware
it is running on.  For ease of analysis, we assume that the
hardware has the following characteristics.  These are basically the same
assumptions used to prove the original \hogwild
result~\cite{recht2011hogwild}.
\begin{itemize}
  \item There are multiple threads running iterations of
    (\ref{eqnGeneralUpdate}),
    each with their own cache.  At any point in time, these caches may hold
    different values for the variable $x$, and they communicate via some cache
    coherency protocol.
  \item There exists a central store $\mathcal{S}$ (typically RAM) at which all
    writes are serialized.  This provides a consistent value for the state of
    the system at any point in real time.
  \item If a thread performs a read $R$ of a previously written value $X$,
    and then writes another value $Y$ (dependent on $R$), then the write that
    produced $X$ will be committed to $\mathcal{S}$ before the write that
    produced $Y$.
  \item Each write from an iteration of (\ref{eqnGeneralUpdate}) is to only a
    single entry of $x$ and is done using an
    atomic read-add-write instruction.  That is, there are no write-after-write
    races (handling these is possible, but complicates the analysis).
\end{itemize}
Notice that, if we let $x_t$ denote the value of the vector $x$ in the central
store $\mathcal{S}$ after $t$ writes have occurred, then since the writes are
atomic, the value of $x_{t+1}$ is solely dependent on the single thread that
produces the write that is serialized next in $\mathcal{S}$.  If
we let $\tilde G_t$ denote the update function sample that is used by that
thread for that write, and $v_t$ denote the cached value of $x$ used by that
write, then
\begin{equation}
  \label{eqnHogwildUpdate}
  x_{t+1} = x_t - \tilde G_t(\tilde v_t)
\end{equation}
Our hardware model further constrains the value of $\tilde v_t$: all the read
elements of $\tilde v_t$ must have been written to $\mathcal{S}$ at some
time before $t$.  Therefore, for some nonnegative variable $\tilde \tau_{i,t}$,
\begin{equation}
  \label{eqnHogwildVDef}
  e_i^T \tilde v_t = e_i^T x_{t - \tilde \tau_{i,t}},
\end{equation}
where $e_i$ is the $i$th standard basis vector.  We can think of
$\tilde \tau_{i,t}$ as the \emph{delay} in the $i$th coordinate caused by the
parallel updates.

We can conceive of this system as a stochastic process
with two sources of randomness: the noisy update function
samples $\tilde G_t$ and the delays $\tilde \tau_{i,t}$. We assume that the
$\tilde G_t$
are independent and identically distributed---this is reasonable because they
are sampled independently by the updating threads.  It would be unreasonable,
though, to assume the same for the $\tilde \tau_{i,t}$, since delays may very
well be correlated in the system.  Instead, we assume that the delays are
bounded from above by some random variable $\tilde \tau$.  Specifically,
if $\F_t$, the \emph{filtration},
denotes all random events that occurred before timestep $t$, then for
any $i$, $t$, and $k$,
\begin{equation}
  \label{eqnHogwildTauBound}
  \Probc{\tilde \tau_{i,t} \ge k}{\F_t}
  \le
  \Prob{\tilde \tau \ge k}.
\end{equation}
We let $\tau = \Exvsmall{\tilde \tau}$, and call $\tau$ the
\emph{worst-case expected delay}.

\subsection{Convergence Rates for Asynchronous SGD}
\label{ssAsyncConvergence}
Now that we are equipped with a stochastic model for the asynchronous SGD
algorithm, we show how we can use a rate supermartingale to give a convergence
rate for asynchronous algorithms.  To do this, we need some continuity and
boundedness assumptions; we collect these into a definition, and then state
the theorem.
\begin{definition}
  An algorithm with rate supermartingale $W$ is $(H, R, \xi)$-bounded if the
  following conditions hold.
  First, $W$ must be Lipschitz continuous in the current iterate
  with parameter $H$; that is, for any $t$, $u$, $v$, and sequence
  $x_t, \ldots, x_0$,
  \begin{equation}
    \label{eqnWLipschitz}
    \normsmall{
      W_t(u, x_{t-1}, \ldots, x_0)
      - 
      W_t(v, x_{t-1}, \ldots, x_0)
    } 
    \le 
    H \normsmall{u - v}.
  \end{equation}
  Second, $\tilde G$ must be Lipschitz continuous in expectation
  with parameter $R$; that is, for any $u$, and $v$,
  \begin{equation}
    \label{eqnGLipschitz}
    \Exvsmall{\normsmall{
      \tilde G(u)
      - 
      \tilde G(v)
    }}
    \le
    R \normsmall{u - v}_1.
  \end{equation}
  Third, the expected magnitude of the update must be bounded by $\xi$.
  That is, for any $x$,
  \begin{equation}
    \label{eqnBoundedUpdateDist}
    \Exvsmall{\normsmall{\tilde G(x)}} \le \xi.
  \end{equation}
\end{definition}

\begin{theorem}
  \label{thmHogwild}
  Assume that we run an asynchronous stochastic algorithm with the above
  hardware model, for which $W$ is a $(H, R, \xi)$-bounded 
  \emph{rate supermartingale} with horizon $B$.
  Further assume that $H R \xi \tau < 1$.
  For any $T \le B$, the probability
  that the algorithm has not succeeded by time $T$ is
  \[
    \Prob{F_T} \le \frac{\Exvsmall{W(0, x_0)}}{(1 - H R \xi \tau) T}.
  \]
\end{theorem}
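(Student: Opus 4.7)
The plan is to mimic the proof of Statement~\ref{stmtSequential}, but with $W$ replaced by its \emph{stopped} version. Let $\sigma = \min\{t : x_t \in S\}$ be the first success time, and define $\bar W_t = W_{t \wedge \sigma}(x_0, x_1, \ldots, x_{t \wedge \sigma})$. Because the algorithm has not yet succeeded at time $t \wedge \sigma$ (by the very definition of $\sigma$), the rate-supermartingale condition~(\ref{eqnBoundedTime}) gives $\bar W_t \ge t \wedge \sigma$. Using $\sigma \wedge T = \sum_{s=0}^{T-1} \mathbb{1}[F_s]$, this yields $T \Prob{F_T} \le \Exvsmall{\sigma \wedge T} \le \Exvsmall{\bar W_T}$, reducing the theorem to an upper bound on $\Exvsmall{\bar W_T}$.

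For the one-step drift, I would observe that $\bar W_{t+1} = \bar W_t$ on $\{t \ge \sigma\}$ while on $F_t = \{t < \sigma\}$ the stopped process advances exactly as $W$ does. Chaining the Lipschitz inequality~(\ref{eqnWLipschitz}) to replace $\tilde G_t(\tilde v_t)$ by $\tilde G_t(x_t)$, the sequential supermartingale condition~(\ref{eqnBoundedW}), and the expected Lipschitz bound~(\ref{eqnGLipschitz}) gives
\[
  \Exvc{\bar W_{t+1}}{\F_t} \le \bar W_t + HR \, \mathbb{1}[F_t] \, \Exvc{\norm{x_t - \tilde v_t}_1}{\F_t}.
\]
Summing and taking unconditional expectations yields $\Exvsmall{\bar W_T} \le \Exvsmall{W_0(x_0)} + HR \sum_{t=0}^{T-1} \Exvsmall{\mathbb{1}[F_t] \norm{x_t - \tilde v_t}_1}$.

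Next I would bound the stale-read error. The atomic single-coordinate-write hypothesis makes each $\tilde G_s$ supported on one index $j_s$, so the telescoping $x_t - x_{t - \tilde\tau_{i,t}} = -\sum_{s = t - \tilde\tau_{i,t}}^{t-1} \tilde G_s(\tilde v_s)$ collapses to $\norm{x_t - \tilde v_t}_1 \le \sum_{s < t} \mathbb{1}[\tilde\tau_{j_s, t} \ge t - s] \normsmall{\tilde G_s}$. Passing to expectations, the delay bound~(\ref{eqnHogwildTauBound}) replaces the indicator by $\Prob{\tilde\tau \ge t - s}$; the inclusion $F_t \subseteq F_s$ gives $\mathbb{1}[F_t] \le \mathbb{1}[F_s]$; and~(\ref{eqnBoundedUpdateDist}) bounds $\Exvsmall{\mathbb{1}[F_s] \normsmall{\tilde G_s}} \le \xi \Prob{F_s}$. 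Swapping the $(t, s)$ order of summation and using $\sum_{k \ge 1} \Prob{\tilde\tau \ge k} \le \tau$ then produces
\[
  \sum_{t=0}^{T-1} \Exvsmall{\mathbb{1}[F_t] \norm{x_t - \tilde v_t}_1} \le \xi \tau \sum_{s=0}^{T-1} \Prob{F_s} = \xi \tau \, \Exvsmall{\sigma \wedge T}.
\]

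The three bounds then chain self-referentially,
\[
  \Exvsmall{\sigma \wedge T} \le \Exvsmall{\bar W_T} \le \Exvsmall{W_0(x_0)} + HR\xi\tau \, \Exvsmall{\sigma \wedge T},
\]
and the hypothesis $HR\xi\tau < 1$ lets us rearrange to $\Exvsmall{\sigma \wedge T}(1 - HR\xi\tau) \le \Exvsmall{W_0(x_0)}$, giving $T \Prob{F_T} \le \Exvsmall{W_0(x_0)}/(1 - HR\xi\tau)$. I expect the main obstacle to be identifying the right stopped process: without stopping, the drift-term summation only produces the weaker additive estimate $\Exvsmall{W_T} \le \Exvsmall{W_0} + HR\xi\tau \, T$ and hence $\Prob{F_T} \le \Exvsmall{W_0}/T + HR\xi\tau$. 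Working with $\bar W$ is precisely what makes $\Exvsmall{\sigma \wedge T}$ appear on both sides of the inequality, converting that additive $HR\xi\tau\, T$ into the tight multiplicative factor $(1 - HR\xi\tau)^{-1}$ in the claim.
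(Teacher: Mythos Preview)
Your proposal is correct and takes a genuinely different route from the paper. The paper builds an auxiliary stopped process
\[
  V_t = W_t - HR\xi\tau\, t + HR\sum_{k\ge 1}\norm{x_{t-k+1}-x_{t-k}}\sum_{m\ge k}\Prob{\tilde\tau\ge m},
\]
verifies directly that $V_t$ is a supermartingale under the asynchronous dynamics, and then reruns the Statement~\ref{stmtSequential} argument on $V$: on $F_T$ one has $V_T\ge (1-HR\xi\tau)T$, and nonnegativity on $F_T^c$ gives the bound. You instead keep $\bar W_t=W_{t\wedge\sigma}$ untouched, bound its one-step drift by $HR\,\mathbb{1}[F_t]\,\Exvc{\norm{x_t-\tilde v_t}_1}{\F_t}$, sum, and then close a self-referential inequality in $\Exvsmall{\sigma\wedge T}$. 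The advantage of your route is that nothing has to be guessed: the paper's look-back correction $\sum_k\norm{x_{t-k+1}-x_{t-k}}\sum_{m\ge k}\Prob{\tilde\tau\ge m}$ is exactly what one must invent to make $V_t$ a genuine supermartingale, whereas in your argument the analogous bookkeeping falls out automatically when you swap the $(t,s)$ summation. Your approach also yields the slightly stronger intermediate statement $(1-HR\xi\tau)\,\Exvsmall{\sigma\wedge T}\le\Exvsmall{W_0(x_0)}$, which bounds the expected stopped time rather than just the tail $\Prob{F_T}$. The paper's construction, on the other hand, packages everything into a single honest supermartingale $V_t$, which is convenient if one later wants to apply optional stopping or concentration tools beyond a first-moment bound. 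One small point worth making explicit when you write this up: the replacement $\mathbb{1}[F_t]\le\mathbb{1}[F_s]$ is genuinely needed before invoking~(\ref{eqnBoundedUpdateDist}), since $F_t$ depends on $\tilde G_s$ through $x_{s+1},\ldots,x_t$ and would otherwise obstruct the conditioning that gives $\Exvsmall{\mathbb{1}[F_s]\,\normsmall{\tilde G_s}}\le\xi\,\Prob{F_s}$.
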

Note that this rate depends only on the worst-case expected delay $\tau$ and
not on any other properties of the hardware model.
Compared to the result of Statement \ref{stmtSequential}, the probability of
failure has only increased by a factor of $1 - H R \xi \tau$.
In most practical cases, $H R \xi \tau \ll 1$, so this increase in
probability is negligible.

Since the proof of this theorem is simple, but uses non-standard techniques,
we outline it here.  First, notice that the process $W_t$, which was
a supermartingale in the sequential case, is not in the 
asynchronous case because of the delayed updates.  Our strategy is to
use $W$ to produce a new process $V_t$ that is a supermartingale in this
case.  For any $t$ and $x_{\cdot}$, if $x_u \notin S$ for all $u < t$, we
define
\[
  V_t(x_t, \ldots, x_0)
  =
  W_t(x_t, \ldots, x_0)
  -
  H R \xi \tau t
  +
  H R
  \sum_{k=1}^{\infty}
  \norm{x_{t-k+1} - x_{t-k}}
  \sum_{m=k}^{\infty}
  \Prob{\tilde \tau \ge m}.
\]
Compared with $W$, there are two additional terms here.  The first term
is negative, and cancels out some of the unhappiness from
(\ref{eqnBoundedTime}) that we ascribed to running for many iterations.
We can interpret this as us accepting that we may need to run for more
iterations than in the sequential case.
The second term measures the distance between recent iterates; we would be
unhappy if this becomes large because then the noise from the delayed updates
would also be large.  On the other hand, if $x_u \in S$ for some $u < t$, then
we define
\[
  V_t(x_t, \ldots, x_u, \ldots, x_0)
  =
  V_u(x_u, \ldots, x_0).
\]
We call $V_t$ a \emph{stopped process} because its value doesn't change after
success occurs.
It is straightforward to show that $V_t$ is a
supermartingale for the asynchronous algorithm.
Once we know this, the same
logic used in the proof of Statement \ref{stmtSequential} can be used to prove
Theorem \ref{thmHogwild}.

Theorem \ref{thmHogwild} gives us a straightforward way of bounding the
convergence time of any asynchronous stochastic algorithm.  First, we
find a rate supermartingale for the problem; this is typically no harder than
proving \crcchange{sequential convergence.}
Second, we \crcchange{find parameters}
such that the problem is
$(H, R, \xi)$-bounded, \crcchange{typically}
; this is easily done for well-behaved problems
by using differentiation to bound the
Lipschitz constants.  Third, we apply Theorem \ref{thmHogwild} to get a rate
for asynchronous SGD.  Using this method, analyzing an asynchronous algorithm
is really no more difficult than analyzing its sequential analog.

\section{Applications}
Now that we have proved our main result, we turn our attention to
applications.  We show, for a couple of algorithms, how to construct a 
rate supermartingale.  We demonstrate that doing this allows us to recover
known rates for \hogwild algorithms as well as analyze cases where no
known rates exist.

\subsection{Convex Case, High Precision Arithmetic}
\label{ssConvexHighPrecision}
First, we consider the simple case of using asynchronous SGD to minimize
a convex function $f(x)$ using unbiased gradient samples $\nabla \tilde f(x)$.
That is, we run the update rule
\begin{equation}
  \label{eqnConvexUpdate}
  x_{t+1} = x_t - \alpha \nabla \tilde f_t(x).
\end{equation}
We make the standard assumption
that $f$ is strongly convex with parameter $c$; that is, for all $x$ and $y$
\begin{equation}
  \label{eqnStrongConvexity}
  (x - y)^T \left( \nabla f(x) - \nabla f(y) \right) \ge c \normsmall{x - y}^2.
\end{equation}
We also assume continuous differentiability of $\nabla \tilde f$ with
1-norm Lipschitz constant $L$, 
\begin{equation}
  \label{eqnConvexLipschitz}
  \Exvsmall{\normsmall{\nabla \tilde f(x) - \nabla \tilde f(y)}}
  \le
  L \normsmall{x - y}_1.
\end{equation}
We require that the second moment of the gradient sample is also
bounded for some $M > 0$ by
\begin{equation}
  \label{eqnConvexNoise}
  \Exvsmall{\normsmall{\nabla \tilde f(x)}^2} \le M^2.
\end{equation}
For some $\epsilon > 0$, we let the success region be
\[
  S = \{ x | \normsmall{x - x^*}^2 \le \epsilon \}.
\]
Under these conditions, we can construct a rate supermartingale for this
algorithm.
\begin{lemma}
  \label{lemmaConvexW}
  There exists a $W_t$ where, if the algorithm hasn't succeeded by timestep
  $t$,
  \[
    W_t(x_t, \ldots, x_0)
    =
    \frac{
      \epsilon
    }{
      2 \alpha c \epsilon
      -
      \alpha^2 M^2
    }
    \log\left( e \norm{x_t - x^*}^2 \epsilon^{-1} \right)
    +
    t,
  \]
  such that $W_t$ is a rate submartingale for the above algorithm
  with horizon $B = \infty$.  Furthermore, it is $(H, R, \xi)$-bounded with
  parameters: 
  \crcchange{
  $H = 2 \sqrt{\epsilon} (2 \alpha c \epsilon - \alpha^2 M^2)^{-1}$,
  $R = \alpha L$, and $\xi = \alpha M$.}
\end{lemma}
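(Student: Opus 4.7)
The plan is to verify, in turn, each clause of the rate-supermartingale and $(H,R,\xi)$-bounded definitions for the explicit $W_t$ of the statement. Write $A = \epsilon/(2\alpha c \epsilon - \alpha^2 M^2)$ so that $W_t = A \log(e \norm{x_t - x^*}^2/\epsilon) + t$ on the not-succeeded region, and extend by the standard stopped-process convention after success; the step size is implicitly constrained to $\alpha < 2c\epsilon/M^2$ so that $A > 0$.

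First I would establish the supermartingale property (\ref{eqnBoundedW}) by reducing to the classical SGD distance recursion. Expanding $\norm{x_{t+1} - x^*}^2$ via (\ref{eqnConvexUpdate}), taking $\Exvcsmall{\cdot}{\F_t}$, and combining strong convexity (\ref{eqnStrongConvexity}) at $y = x^*$ (using $\nabla f(x^*) = 0$) with the second-moment bound (\ref{eqnConvexNoise}) yields
\[
  \Exvcsmall{\norm{x_{t+1} - x^*}^2}{\F_t}
  \le (1 - 2\alpha c)\norm{x_t - x^*}^2 + \alpha^2 M^2.
\]
Dividing by $\norm{x_t - x^*}^2$ and invoking the not-succeeded hypothesis $\norm{x_t - x^*}^2 \ge \epsilon$ bounds this ratio by $1 - (2\alpha c \epsilon - \alpha^2 M^2)/\epsilon$. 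Applying Jensen's inequality to the concave $\log$ and then $\log(1+y) \le y$ gives
\[
  A \cdot \Exvcsmall{\log\!\left(\norm{x_{t+1} - x^*}^2 / \norm{x_t - x^*}^2\right)}{\F_t}
  \le -A \cdot \frac{2\alpha c \epsilon - \alpha^2 M^2}{\epsilon} = -1,
\]
which cancels the $+1$ increment of the ``$+t$'' term, establishing $\Exvcsmall{W_{t+1}}{\F_t} \le W_t$. Clause (\ref{eqnBoundedTime}) and nonnegativity follow from the same hypothesis: $\norm{x_T - x^*}^2 \ge \epsilon$ forces $\log(e\norm{x_T - x^*}^2/\epsilon) \ge 1$, so $W_T \ge T$, and the stopped process preserves the bound after success.

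For the $(H,R,\xi)$ parameters, I would differentiate $u \mapsto A\log(e \norm{u - x^*}^2/\epsilon)$ to obtain the gradient $2A(u-x^*)/\norm{u-x^*}^2$ of norm $2A/\norm{u-x^*}$, which on the unsucceeded region is bounded by $2A/\sqrt{\epsilon}$; the mean value theorem then yields $H = 2\sqrt{\epsilon}/(2\alpha c \epsilon - \alpha^2 M^2)$. The bound $R = \alpha L$ is read off directly from (\ref{eqnConvexLipschitz}) using $\tilde G = \alpha \nabla \tilde f$, and $\xi = \alpha M$ follows from $\Exvsmall{\norm{\alpha \nabla \tilde f(x)}} \le \alpha \sqrt{\Exvsmall{\norm{\nabla \tilde f(x)}^2}} \le \alpha M$ by Jensen's inequality and (\ref{eqnConvexNoise}).

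The main obstacle is the non-obvious choice of the logarithmic form of $W_t$: a direct supermartingale built from $\norm{x_t - x^*}^2$ fails because SGD's contraction is multiplicative rather than additive and so admits no clean ``$+t$'' decomposition. Taking $\log$ converts the multiplicative contraction $1 - 2\alpha c + O(\alpha^2)$ into an additive decrement, and the coefficient $A$ is tuned precisely so that this decrement is $-1$, exactly offsetting the per-step time penalty required by (\ref{eqnBoundedTime}). The remaining technicalities — the step-size restriction and the success-boundary handoff — are routine.
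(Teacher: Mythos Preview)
Your supermartingale computation and the derivations of $R = \alpha L$ and $\xi = \alpha M$ match the paper's. The gap is in the $H$-bound. Condition (\ref{eqnWLipschitz}) must hold for \emph{all} $u,v \in \R^n$, not merely those outside $S$: in the proof of Theorem~\ref{thmHogwild} it is applied at the one-step iterates produced from $x_t$ and from the stale vector $\tilde v_t$, and these may land anywhere, including arbitrarily close to $x^*$. With the plain logarithm your gradient norm $2A/\norm{u-x^*}$ is unbounded as $u \to x^*$, so no finite $H$ works; even if both endpoints lie outside $S$, the segment joining them can pass near $x^*$, so restricting to the unsucceeded region does not rescue the mean-value argument. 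A related problem is nonnegativity: under your stopped convention $W$ freezes at the first success time $u^*$, and $A\log\!\left(e\norm{x_{u^*}-x^*}^2/\epsilon\right)+u^*$ can be arbitrarily negative when $x_{u^*}$ is close to $x^*$, violating the nonnegativity required in the rate-supermartingale definition.

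The paper's remedy is to replace $\log$ by a concave piecewise function $\plog(y)$ equal to $\log(ey)$ for $y \ge 1$ and to $y$ for $y \le 1$. On the unsucceeded region one has $\epsilon^{-1}\norm{x_t-x^*}^2 \ge 1$, so $\plog$ coincides with the formula displayed in the lemma, and your Jensen/contraction step goes through verbatim via the inequality $\plog(y(1+\Delta)) \le \plog(y)+\Delta$ for $y \ge 1$. But now $\plog'(y)=\min(1,y^{-1})$ is globally bounded, the gradient of $W_t$ has norm
\[
  \frac{2}{2\alpha c\epsilon-\alpha^2 M^2}\,
  \min\!\left(\norm{x-x^*},\, \epsilon\norm{x-x^*}^{-1}\right),
\]
and this is maximized at $\norm{x-x^*}^2=\epsilon$, giving exactly the claimed $H$ over all of $\R^n$. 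The linear branch also ensures $W_t \ge 0$ everywhere.
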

Using this and Theorem \ref{thmHogwild} gives us a direct bound on
the failure rate of convex \hogwild SGD.
\begin{corollary}
  \label{corConvexHogwild}
  Assume that we run an asynchronous version of the above SGD algorithm,
  where for some constant $\vartheta \in (0, 1)$ we 
  choose step size
  \[
    \alpha
    =
    \frac{c \epsilon \vartheta}{M^2 + 2 L M \tau \sqrt{\epsilon}}.
  \]
  Then for any $T$, the probability
  that the algorithm has not succeeded by time $T$ is
  \[
    \Prob{F_T}
    \le
    \frac{
      M^2 + 2 L M \tau \sqrt{\epsilon}
    }{
      c^2 \epsilon \vartheta T
    }
    \log\left( e \norm{x_0 - x^*}^2 \epsilon^{-1} \right).
  \]
\end{corollary}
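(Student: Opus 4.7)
The plan is to apply Theorem~\ref{thmHogwild} directly, using Lemma~\ref{lemmaConvexW} to supply the rate supermartingale and the $(H,R,\xi)$-boundedness. All that really remains is to verify the hypothesis $HR\xi\tau<1$ for the prescribed step size and then to simplify the resulting bound into the clean form advertised in the statement.

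First I would invoke Lemma~\ref{lemmaConvexW} to obtain the rate supermartingale $W_t$ with horizon $B=\infty$, together with the constants $H=2\sqrt{\epsilon}(2\alpha c\epsilon-\alpha^2 M^2)^{-1}$, $R=\alpha L$, and $\xi=\alpha M$. Multiplying these gives
\[
  HR\xi\tau \;=\; \frac{2\alpha^2 LM\tau\sqrt{\epsilon}}{2\alpha c\epsilon-\alpha^2 M^2}.
\]
This is strictly less than $1$ precisely when $\alpha(M^2+2LM\tau\sqrt{\epsilon})<2c\epsilon$. Plugging in $\alpha=c\epsilon\vartheta/(M^2+2LM\tau\sqrt{\epsilon})$ reduces this to $\vartheta<2$, which holds since $\vartheta\in(0,1)$; so the hypothesis of Theorem~\ref{thmHogwild} is satisfied and the theorem applies for every $T$ (any $T\le B=\infty$).

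Next I would combine the two factors appearing in Theorem~\ref{thmHogwild}. A short calculation shows
\[
  \frac{W_0(x_0)}{1-HR\xi\tau}
  \;=\;
  \frac{\epsilon}{2\alpha c\epsilon-\alpha^2(M^2+2LM\tau\sqrt{\epsilon})}
  \,\log\!\left(e\,\norm{x_0-x^*}^2\epsilon^{-1}\right),
\]
since the $2\alpha c\epsilon-\alpha^2 M^2$ in the denominator of $W_0$ cancels against the same factor coming from $1-HR\xi\tau$. Writing $D:=M^2+2LM\tau\sqrt{\epsilon}$ and substituting $\alpha=c\epsilon\vartheta/D$, the denominator becomes $c^2\epsilon^2\vartheta(2-\vartheta)/D$, so the prefactor simplifies to $D/(c^2\epsilon\vartheta(2-\vartheta))$. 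Using $\vartheta<1$ and hence $2-\vartheta>1$ to weaken the bound, Theorem~\ref{thmHogwild} then yields the stated inequality.

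The only mildly delicate step is the algebraic cancellation that produces $2\alpha c\epsilon-\alpha^2 D$ in the denominator; everything else is bookkeeping. In particular there is no new probabilistic content beyond what Theorem~\ref{thmHogwild} and Lemma~\ref{lemmaConvexW} already provide, so I do not expect any serious obstacle.
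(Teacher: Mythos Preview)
Your proposal is correct and follows essentially the same route as the paper: apply Theorem~\ref{thmHogwild} with the constants from Lemma~\ref{lemmaConvexW}, observe that the $(2\alpha c\epsilon-\alpha^2 M^2)$ factors cancel to leave $2\alpha c\epsilon-\alpha^2 D$ in the denominator, substitute $\alpha=c\epsilon\vartheta/D$ to get $c^2\epsilon^2\vartheta(2-\vartheta)/D$, and finally relax $2-\vartheta>1$. The paper in fact proves the more general low-precision version (Corollary~\ref{corConvexBuckwild}) and recovers this one by setting $\kappa=0$, but the computation is identical to yours.
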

This result is
more general than the result in ~\citet{recht2011hogwild}.
The main differences are: that we make no
assumptions about the sparsity structure of
the gradient samples; and that our rate depends only on the second moment
of $\tilde G$ and the expected value of $\tilde \tau$, as opposed to requiring
absolute bounds on their magnitude.
Under their stricter assumptions, the
result of Corollary \ref{corConvexHogwild} recovers their rate.

\subsection{Convex Case, Low Precision Arithmetic}
\label{ssConvexLowPrecision}
One of the ways \buckwild achieves high performance is by using low-precision
fixed-point arithmetic.  This introduces additional noise to the system in the
form of \emph{round-off error}.  We consider this error to be part of the
\buckwild hardware model.  We assume that the round-off error can be
modeled by an unbiased rounding function
operating on the update samples.  That is, for some chosen precision factor
$\kappa$, there is a random quantization function
$\tilde Q$
such that, for any $x \in \R$, it holds that $\mathbf{E}[\tilde Q(x)] = x$, and
the round-off error is bounded by $|\tilde Q(x) - x | < \alpha \kappa M$.
Using this function, we can write a low-precision asynchronous update rule
for convex SGD as
\begin{equation}
  \label{eqnBuckwildUpdate}
  x_{t+1}
  =
  x_t - \tilde Q_t\left(\alpha \nabla \tilde f_t(\tilde v_t)\right),
\end{equation}
where $\tilde Q_t$ operates only on the single nonzero entry of
$\nabla \tilde f_t(\tilde v_t)$.
In the same way as we did in the high-precision case, we can use these
properties to construct a rate supermartingale for the low-precision version
of the convex SGD algorithm, and then use Theorem \ref{thmHogwild} to 
bound the failure rate of convex \buckwild
\begin{corollary}
  \label{corConvexBuckwild}
  Assume that we run asynchronous low-precision convex SGD,
  and for some $\vartheta \in (0, 1)$, we choose step size
  \[
    \alpha
    =
    \frac{
      c \epsilon \vartheta
    }{
      M^2 (1 + \kappa^2) + L M \tau (2 + \kappa^2) \sqrt{\epsilon}
    },
  \]
  then for any $T$, the probability
  that the algorithm has not succeeded by time $T$ is
  \[
    \Prob{F_T}
    \le
    \frac{
       M^2 (1 + \kappa^2) + L M \tau (2 + \kappa^2) \sqrt{\epsilon}
    }{
      c^2 \epsilon \vartheta T
    }
    \log\left( e \norm{x_0 - x^*}^2 \epsilon^{-1} \right).
  \]
\end{corollary}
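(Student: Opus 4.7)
The plan is to follow the same three-step template as Corollary \ref{corConvexHogwild}: (i) build a rate supermartingale tailored to the quantized update, (ii) extract the Lipschitz and magnitude parameters needed by Theorem \ref{thmHogwild}, and (iii) pick $\alpha$ to minimize the resulting bound. Since the quantizer $\tilde Q_t$ is unbiased, the expected drift of one step is unchanged from the high-precision case, so the same logarithmic potential as in Lemma \ref{lemmaConvexW} should work; only the second-moment constant needs adjustment to account for round-off.

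Concretely, I would first show
\[
  \mathbf{E}\!\left[\left\|\tilde Q_t(\alpha\nabla\tilde f_t(x))\right\|^2\right]
  \le
  \alpha^2 M^2(1+\kappa^2),
\]
by splitting into variance plus squared mean and using the uniform bound $|\tilde Q(y)-y|<\alpha\kappa M$ on the lone nonzero coordinate. Substituting this into the potential-function argument underlying Lemma \ref{lemmaConvexW} yields a rate supermartingale
\[
  W_t
  =
  \frac{\epsilon}{2\alpha c\epsilon-\alpha^2 M^2(1+\kappa^2)}\log\!\left(e\|x_t-x^*\|^2\epsilon^{-1}\right)+t,
\]
provided $\alpha$ is small enough that the denominator is positive. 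I would then read off $H=2\sqrt{\epsilon}/(2\alpha c\epsilon-\alpha^2 M^2(1+\kappa^2))$ and identify $R$ and $\xi$ for $\tilde G(x)=\tilde Q_t(\alpha\nabla\tilde f_t(x))$; these must combine so that $R\xi=\tfrac{1}{2}\alpha^2 LM(2+\kappa^2)$, in order for the $HR\xi\tau$ penalty from Theorem \ref{thmHogwild} to contribute exactly the $LM\tau(2+\kappa^2)\sqrt{\epsilon}$ term. I expect to obtain this by coupling $\tilde Q_t$ at its two input arguments through a shared source of rounding randomness, decomposing the difference into a gradient-Lipschitz piece (contributing $\alpha L\|u-v\|_1$) and a round-off piece (contributing a $\kappa$-dependent factor), and absorbing the residual quantization noise into $\xi$ via the second-moment bound above.

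Plugging into Theorem \ref{thmHogwild} then gives
\[
  \Prob{F_T}
  \le
  \frac{\epsilon\,\log\!\left(e\|x_0-x^*\|^2\epsilon^{-1}\right)}{\left(2\alpha c\epsilon-\alpha^2\bigl[M^2(1+\kappa^2)+LM\tau(2+\kappa^2)\sqrt{\epsilon}\bigr]\right)T};
\]
the denominator is concave quadratic in $\alpha$ and is maximized at $\alpha^\star=c\epsilon/(M^2(1+\kappa^2)+LM\tau(2+\kappa^2)\sqrt{\epsilon})$. Setting $\alpha=\vartheta\alpha^\star$ and using $\vartheta(2-\vartheta)\ge\vartheta$ for $\vartheta\in(0,1)$ rearranges into the stated bound. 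I expect the hardest step to be the $R$-Lipschitz calculation: forcing the quantizer difference to scale as $\|u-v\|_1$ rather than picking up an uncontrolled additive round-off term requires a careful coupling of the rounding randomness across the two input points, and it is precisely this step that makes the specific coefficient $(2+\kappa^2)$ appear cleanly instead of being buried in a coarser additive constant.
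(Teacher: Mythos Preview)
Your overall plan matches the paper's proof exactly: build the same logarithmic rate supermartingale with the second-moment constant inflated to $M^2(1+\kappa^2)$, read off $H$, apply Theorem~\ref{thmHogwild}, and optimize the resulting quadratic in $\alpha$ via $\alpha=\vartheta\alpha^\star$ together with $\vartheta(2-\vartheta)\ge\vartheta$. Your identification of $W_t$, $H$, $\xi$, and the final algebra are all as in the paper.

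The one place you diverge is the $R$-Lipschitz step, and the difficulty you anticipate there is illusory. The paper does not split any $\kappa$-dependence between $R$ and $\xi$; it simply observes that with a monotone coupling of the quantizer (taken without loss of generality), for scalars $a\ge b$ one has $\mathbf{E}\bigl[|\tilde Q(a)-\tilde Q(b)|\bigr]=\mathbf{E}\bigl[\tilde Q(a)-\tilde Q(b)\bigr]=a-b$ by unbiasedness. Hence $\mathbf{E}\bigl[\|\tilde G(u)-\tilde G(v)\|\bigr]=\alpha\,\mathbf{E}\bigl[\|\nabla\tilde f(u)-\nabla\tilde f(v)\|\bigr]\le\alpha L\|u-v\|_1$, giving $R=\alpha L$ with \emph{no} $\kappa$ at all. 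All round-off lives in $\xi=\alpha M\sqrt{1+\kappa^2}$, which you already computed. The coefficient $(2+\kappa^2)$ then enters only at the very last step, via the elementary inequality $2\sqrt{1+\kappa^2}\le 2+\kappa^2$ applied after substituting $H$, $R$, $\xi$ into Theorem~\ref{thmHogwild}. Your proposed device of ``absorbing residual round-off into $\xi$'' does not fit the framework---$R$ and $\xi$ are independent hypotheses of Theorem~\ref{thmHogwild}, not interchangeable budgets---but it is also unnecessary once you see that the monotone-unbiased trick makes the quantizer disappear from the Lipschitz bound entirely.
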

Typically, we choose a precision such that $\kappa \ll 1$; in this case,
the increased error compared to the result of Corollary \ref{corConvexHogwild}
will be negligible and we will converge in a number of samples
that is very similar to the high-precision, sequential case.
\crcchange{Since each \buckwild update runs in less time than an equivalent
\hogwild update, this result means that an execution of \buckwild will produce
same-quality output in less wall-clock time compared with \hogwild}

\subsection{Non-Convex Case, High Precision Arithmetic}
\label{ssNonConvex}
Many machine learning problems are non-convex, but are still solved in practice
with SGD.  In this section, we show that our technique can be adapted to
analyze non-convex problems. Unfortunately, there are no general convergence
results that provide rates for SGD on non-convex problems, so it would be
unreasonable to expect a general proof of convergence for non-convex
\hogwild  Instead, we focus on a particular problem, low-rank least-squares
matrix completion,
\begin{equation}
  \label{eqnBMProblem}
  \begin{array}{ll}
    \mbox{minimize} & \Exvsmall{\normfsmall{\tilde A - x x^T}^2} \\
    \mbox{subject to} & x \in \R^n,
  \end{array}
\end{equation}
for which there exists a sequential SGD algorithm with a martingale-based rate
that has already been proven.  This problem arises in
general data analysis, subspace tracking,
principle component analysis, recommendation systems,
and other applications~\cite{desa2014global}.
In what follows, we let $A = \Exv{\tilde A}$.  We assume that $A$ is symmetric,
and has unit eigenvectors $u_1, u_2, \ldots, u_n$ with corresponding
eigenvalues $\lambda_1 > \lambda_2 \ge \cdots \ge \lambda_n$.  We let $\Delta$,
the \emph{eigengap}, denote $\Delta = \lambda_1 - \lambda_2$.

\citet{desa2014global} provide a martingale-based rate of convergence for a
particular SGD algorithm, Alecton, running on this problem. For simplicity,
we focus on only the rank-1 version of the \crcchange{problem,
and} we assume that, at each timestep, a single entry of
$A$ is used as a sample.  Under these conditions, Alecton uses the update rule
\begin{equation}
  \label{eqPowerIterUpdate}
  x_{t+1}
  =
  (I + \eta n^2 e_{\tilde i_t} e_{\tilde i_t}^T A
    e_{\tilde j_t} e_{\tilde j_t}^T ) x_t,
\end{equation}
where $\tilde i_t$ and $\tilde j_t$ are randomly-chosen indices in $[1, n]$.
It initializes $x_0$ uniformly on the sphere of some radius centered at
the origin.
We can equivalently think of this as a stochastic power iteration algorithm.
For any $\epsilon > 0$, we define the \emph{success set} $S$ to be
\begin{equation}
  \label{eqnAlectonDelta}
  S = \{ x | (u_1^T x)^2 \ge (1 - \epsilon) \norm{x}^2 \}.
\end{equation}
That is, we are only concerned with the direction of $x$, not its magnitude;
this algorithm only recovers the dominant eigenvector of $A$, not
its eigenvalue.
In order to show convergence for this entrywise sampling scheme,
\citet{desa2014global} require that the
matrix $A$ satisfy a \emph{coherence bound}~\cite{jain2012}.
\begin{definition}
  A matrix $A \in \R^{n \times n}$ is incoherent with parameter $\mu$ if
  for every standard basis vector $e_j$,
  and for all unit eigenvectors $u_i$ of the matrix,
  $(e_j^T u_i)^2 \le \mu^2 n^{-1}$.
\end{definition}
They also require that the step size be set, for some constants
$0 < \gamma \le 1$ and $0 < \vartheta < (1 + \epsilon)^{-1}$ as
\[
  \eta
  =
  \frac{
    \Delta \epsilon \gamma \vartheta
  }{
    2 n \mu^4 \normf{A}^2
  }.
\]
For ease of analysis, we add the additional assumptions that our algorithm runs
in some bounded space.  That is, for some
constant $C$, at all times $t$, $1 \le \norm{x_t}$ and $\norm{x_t}_1 \le C$.
As in the convex case, 
by following the martingale-based approach of \citet{desa2014global}, we are
able to generate a rate supermartinagle for this algorithm---to save space,
we only state its initial value and not the full expression.
\begin{lemma}
  \label{lemmaAlectonW}
  For the problem above, choose any
  horizon $B$ such that $\eta \gamma \epsilon \Delta B \le 1$.  
  Then there exists a function $W_t$ such that
  \crcchange{$W_t$ is a rate supermartingale for the above non-convex SGD
  algorithm with parameters 
  $H = 8 n \eta^{-1} \gamma^{-1} \Delta^{-1} \epsilon^{-\frac{1}{2}}$,
  $R = \eta \mu \normf{A}$, and $\xi = \eta \mu \normf{A} C$, and}
  \[
    \Exv{W_0(x_0)}
    \le
    2 \eta^{-1} \Delta^{-1}
    \log( e n \gamma^{-1} \epsilon^{-1} )
    +
    B \sqrt{2 \pi \gamma}.
  \]
\end{lemma}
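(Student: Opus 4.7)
The plan is to adapt the sequential martingale construction of \citet{desa2014global} for Alecton into a rate supermartingale in the sense of this paper, then verify the three $(H, R, \xi)$-bounded conditions by direct computation on the update rule (\ref{eqPowerIterUpdate}), and finally compute $\Exv{W_0(x_0)}$ using the spherical initialization.

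First, following desa2014global, I would track progress via the alignment $\tau_t = (u_1^T x_t)^2 / \norm{x_t}^2 \in [0,1]$, which reaches the success set $S$ exactly when $\tau_t \ge 1 - \epsilon$. The key per-step drift estimate in desa2014global says that under the entrywise sampling of (\ref{eqPowerIterUpdate}), a suitable monotone transform $\phi(\tau_t)$ (roughly of the form $-\log$ of $\tau_t$ or of $\tau_t/(1-\tau_t)$) decreases by at least a constant multiple of $\eta \gamma \Delta \epsilon$ per step in expectation whenever $\gamma/n \le \tau_t < 1-\epsilon$. I would then set $W_t = c\,\eta^{-1}\gamma^{-1}\Delta^{-1}\epsilon^{-1}\phi(\tau_t) + t$, truncated appropriately on the ``bad'' set $\{\tau_t < \gamma/n\}$ and stopped after success so that the process is non-negative and constant post-success. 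Condition (\ref{eqnBoundedW}) then reduces to the drift estimate, and (\ref{eqnBoundedTime}) follows because $\phi(\tau_t)\ge 0$ before success.

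Second, to bound $H$, $R$, and $\xi$, I would differentiate the chosen $W_t$ in its first argument. Using $\norm{x_t}\ge 1$ and the truncation $\tau_t \ge \gamma/n$ on the regime where $\phi'$ is largest, a careful computation of $\norm{\nabla \phi(\tau_t)}$ and combination with the prefactor yields $H = 8 n \eta^{-1} \gamma^{-1} \Delta^{-1} \epsilon^{-1/2}$. For $R$, I would write $\tilde G(x) = -\eta n^2 e_{\tilde i} e_{\tilde i}^T A e_{\tilde j} e_{\tilde j}^T x$ and take expectation over the uniform choice of $(\tilde i, \tilde j)\in[1,n]^2$; the $n^{-2}$ sampling density cancels the $n^2$, and applying the incoherence $(e_k^T u_i)^2 \le \mu^2/n$ together with a Cauchy--Schwarz bound on rows of $A$ gives the $\ell_1 \to \ell_2$ Lipschitz constant $R = \eta \mu \normf{A}$. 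The bound $\xi = \eta \mu \normf{A} C$ follows from the same computation applied to a single iterate using $\norm{x}_1 \le C$.

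Third, for $\Exv{W_0(x_0)}$, I would split over the events $\{\tau_0 \ge \gamma/n\}$ and $\{\tau_0 < \gamma/n\}$. On the good event, $\phi(\tau_0) = O(\log(en/\gamma) + \log\epsilon^{-1})$, and the prefactor produces the $2\eta^{-1}\Delta^{-1}\log(e n \gamma^{-1}\epsilon^{-1})$ term. On the bad event $W_0$ is bounded crudely by its maximum over the horizon, which is $O(B)$ by the stopped/truncated construction and the horizon assumption $\eta\gamma\epsilon\Delta B\le 1$; multiplying by the probability that the uniform $x_0$ on the sphere lands with $(u_1^T x_0)^2 < (\gamma/n)\norm{x_0}^2$, which via a Gaussian marginal density estimate is at most $\sqrt{2\pi\gamma}$ for small $\gamma$, yields the stated tail term. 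The main obstacle I expect is the Lipschitz analysis for $H$: because $\tau_t$ is a ratio and the drift estimate fails when $\tau_t$ is very small, the truncation of $\phi$ has to be chosen so that the supermartingale property and the Lipschitz bound hold simultaneously with the tight constants claimed; verifying the precise factor $8n$ and the $\epsilon^{-1/2}$ exponent is where the bookkeeping is most delicate.
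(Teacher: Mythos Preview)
Your overall plan matches the paper's, and your computations for $R$ and $\xi$ are essentially what the paper does. But the supermartingale you propose will not produce the constants in the lemma; two ingredients are missing.

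First, the paper does not track the raw alignment $(u_1^T x)^2/\norm{x}^2$. It uses the $\gamma$-regularized quantity
\[
  \tau(x) = \frac{(u_1^T x)^2}{(1-\gamma n^{-1})(u_1^T x)^2 + \gamma n^{-1}\norm{x}^2},
\]
for which \citet{desa2014global} prove the clean multiplicative drift $\Exv{\tau(x+\eta\tilde A x)} \ge \tau(x)\bigl(1+\eta\Delta(1-\tau(x))\bigr)$. Combined with the transform $\plog\!\bigl(\gamma^{-1}n\epsilon^{-1}(1-\tau)\bigr)$, this yields a per-step decrease of order $\eta\Delta\tau$ in that term --- of size $\eta\Delta$ once $\tau \ge \tfrac12$, \emph{not} $\eta\gamma\Delta\epsilon$. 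Your drift estimate forces the prefactor on $\phi$ to be $c\,(\eta\gamma\Delta\epsilon)^{-1}$, and then the ``good-event'' contribution to $\Exv{W_0(x_0)}$ is of order $\eta^{-1}\gamma^{-1}\Delta^{-1}\epsilon^{-1}\log(en\gamma^{-1}\epsilon^{-1})$, off from the lemma's $2\eta^{-1}\Delta^{-1}\log(en\gamma^{-1}\epsilon^{-1})$ by a factor $\gamma^{-1}\epsilon^{-1}$. The $\gamma$ and $\epsilon$ must live inside the logarithm, not in the prefactor.

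Second, the paper does not handle the bad region by a hard truncation of $\phi$ together with a case-split of $\Exv{W_0}$ on $\{\tau_0 < \gamma/n\}$. Instead it adds a smooth second term $2B(1-\tau(x_t))$ to $W_t$ and stops the process the first time the regularized $\tau$ falls below $\tfrac12$ (equivalently, raw alignment below roughly $\gamma/n$). This additive term (i) keeps $W_t$ differentiable, which is required for the $H$ computation --- indeed the stated $H=8n\eta^{-1}\gamma^{-1}\Delta^{-1}\epsilon^{-1/2}$ comes from bounding the gradient of \emph{both} the $\plog$ term and the $2B(1-\tau)$ term, using the horizon constraint $\eta\gamma\epsilon\Delta B \le 1$ to absorb the latter; (ii) guarantees $W_T \ge B$ automatically if the process ever stops with $\tau \le \tfrac12$, since then $2B(1-\tau)\ge B$; and (iii) contributes exactly $2B\,\Exv{1-\tau(x_0)} \le B\sqrt{2\pi\gamma}$ to $\Exv{W_0(x_0)}$ via the spherical-initialization bound, with no event splitting. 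The supermartingale inequality then only has to hold on $\{\tau \ge \tfrac12\}$, which is precisely where the $O(\eta\Delta)$ drift is available.
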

Note that the analysis parameter $\gamma$ allows us to trade off between $B$,
which determines how long we can run the algorithm, and the initial value of
the supermartingale $\Exv{W_0(x_0)}$.
We can now produce a corollary about the convergence rate
by applying Theorem \ref{thmHogwild} and setting $B$ and $T$ appropriately.
\begin{corollary}
  \label{thmHogwildAlecton}
  Assume that we run \hogwild Alecton under these conditions for
  $T$ timesteps, as defined below.  Then the probability of failure,
  $\Prob{F_T}$, will be bounded as below.
  \begin{align*}
    T
    &=
    \frac{
      4 n \mu^4 \normf{A}^2
    }{
      \Delta^2 \epsilon \gamma \vartheta \sqrt{2 \pi \gamma}
    }
    \log \left(
      \frac{e n}{\gamma \epsilon}
    \right),
    &
    \Prob{F_T}
    &\le
    \frac{
      \sqrt{8 \pi \gamma} \mu^2
    }{
      \mu^2 - 4 C \vartheta \tau \sqrt{\epsilon}
    }.
  \end{align*}
\end{corollary}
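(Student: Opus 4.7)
The plan is to apply Theorem \ref{thmHogwild} directly, using the rate supermartingale and the $(H, R, \xi)$ parameters supplied by Lemma \ref{lemmaAlectonW}, then let the prescribed step size $\eta = \Delta \epsilon \gamma \vartheta / (2 n \mu^4 \normf{A}^2)$ absorb all the $\eta$-dependence in the bound.

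First, I would compute $H R \xi$ using the stated parameters. Since $R \xi$ is proportional to $\eta^2 \mu^2 \normf{A}^2 C$ while $H$ is proportional to $\eta^{-1}$, their product is linear in $\eta$, and substituting the prescribed value of $\eta$ collapses the expression to $H R \xi = 4 C \vartheta \sqrt{\epsilon} / \mu^2$. Consequently $1 - H R \xi \tau = (\mu^2 - 4 C \vartheta \tau \sqrt{\epsilon}) / \mu^2$, which already matches the reciprocal of the $\mu$-dependent factor appearing in the claimed bound.

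Second, I would set the horizon $B = T$ in Lemma \ref{lemmaAlectonW}, so that $\Exv{W_0(x_0)} \le 2 \eta^{-1} \Delta^{-1} \log(e n / (\gamma \epsilon)) + T \sqrt{2 \pi \gamma}$. The specific choice of $T$ in the corollary is engineered so that these two summands are equal: a direct substitution confirms that both terms evaluate to $4 n \mu^4 \normf{A}^2 \Delta^{-2} \epsilon^{-1} \gamma^{-1} \vartheta^{-1} \log(e n / (\gamma \epsilon))$. Hence the numerator of the Theorem \ref{thmHogwild} bound becomes $2 T \sqrt{2 \pi \gamma}$, and dividing by $(1 - H R \xi \tau) T$ yields the claimed $\sqrt{8 \pi \gamma}\, \mu^2 / (\mu^2 - 4 C \vartheta \tau \sqrt{\epsilon})$.

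The only nontrivial hypothesis left to check is the compatibility $T \le B$ together with the horizon constraint $\eta \gamma \epsilon \Delta B \le 1$ required by Lemma \ref{lemmaAlectonW}. With $B = T$, both reduce to the single inequality $2 \gamma \epsilon \log(e n / (\gamma \epsilon)) \le \sqrt{2 \pi \gamma}$, which holds in the regime of interest (small $\epsilon$ and modestly small $\gamma$). The main obstacle is really bookkeeping: tracking the algebraic cancellations between the chosen step size and the three Lipschitz-type constants is error-prone, but once $H R \xi$ simplifies cleanly and the balancing of the two terms in $\Exv{W_0(x_0)}$ is noted, Theorem \ref{thmHogwild} does the rest of the work.
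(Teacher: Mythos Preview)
Your proposal is correct and follows essentially the same route as the paper: set the horizon $B = T$, apply Theorem \ref{thmHogwild} with the $(H,R,\xi)$ from Lemma \ref{lemmaAlectonW}, observe that the chosen $T$ equates the two summands in the bound on $\Exv{W_0(x_0)}$ so that the numerator becomes $2T\sqrt{2\pi\gamma}$, and simplify $HR\xi\tau$ to $4C\vartheta\tau\sqrt{\epsilon}/\mu^2$. You additionally note the horizon side condition $\eta\gamma\epsilon\Delta T \le 1$, which the paper's own proof leaves implicit.
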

The fact that we are able to use our technique to analyze a non-convex
algorithm illustrates its generality.  Note that it is possible to combine
our results to analyze asynchronous low-precision non-convex SGD, but the
resulting formulas are complex, so we do not include them here.

\section{Experiments}

We validate our theoretical results for both asynchronous
non-convex matrix completion and \buckwild, a \hogwild implementation
with lower-precision arithmetic. Like \hogwild, a \buckwild
algorithm has multiple threads running an update rule
(\ref{eqnGeneralUpdate}) in parallel
without locking. Compared with \hogwild, which uses 32-bit floating point
numbers
to represent input data, \buckwild uses limited-precision arithmetic
by rounding the input data to 8-bit or 16-bit integers. This not only
decreases the memory usage, but also allows us to take advantage of 
single-instruction-multiple-data (SIMD) instructions 
for integers on modern CPUs.

\begin{table}[t]%
\caption{Training loss of SGD as a function of arithmetic
precision for logistic regression.}%
\label{tabTrainingLoss}%
\begin{center}
\begin{tabu}{r|[2pt]c|c|c|[2pt]c|c|c}
Dataset & Rows & Columns & Size & 32-bit float & 16-bit int & 8-bit int \\
\tabucline[2pt]{-}
Reuters & 8K & 18K & 1.2GB & $0.5700$ & $0.5700$ & $0.5709$ \\
Forest  & 581K & 54 & 0.2GB & $0.6463$ & $0.6463$ & $0.6447$ \\
RCV1 & 781K & 47K & 0.9GB & $0.1888$ & $0.1888$ & $0.1879$ \\
Music & 515K    & 91    & 0.7GB      & $0.8785$ & $0.8785$ & $0.8781$ \\
\end{tabu}
\end{center}
\end{table}


\begin{figure}[t]
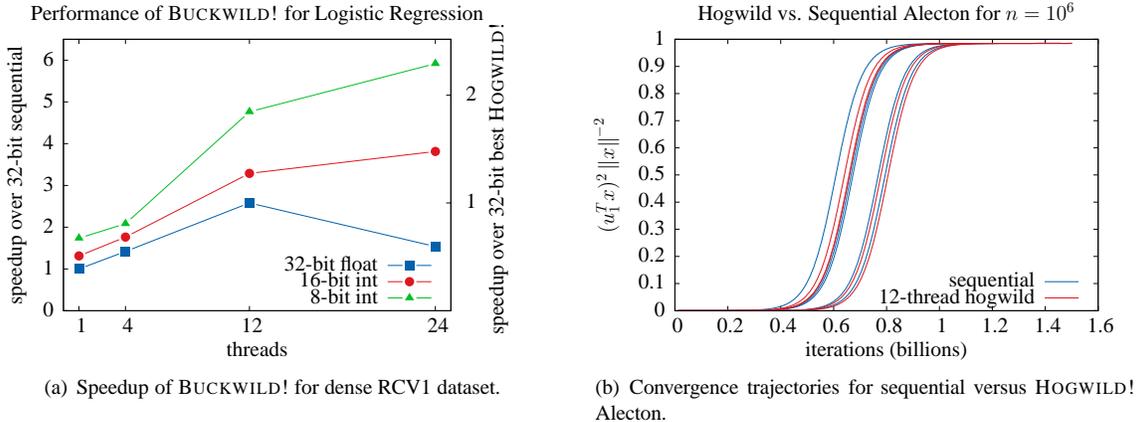
%
\iftoggle{arxiv}{}{\vspace{-2em}}
\centering
\subfigure[Speedup of \buckwild for dense RCV1 dataset.]{%
\centering%
\resizebox{!}{.30\textwidth}{
\Large \input{plotspeedup.tex}%
}
\label{figSpeedup}}\qquad%
\subfigure[Convergence trajectories for sequential versus \hogwild Alecton.]{%
\centering%
\resizebox{!}{.30\textwidth}{
\Large \input{plot1m.tex}%
}
\label{figHogwildAlecton}}
\label{figExperiments} 
\caption{
Experiments compare the training loss, performance, and convergence of
\hogwild and \buckwild algorithms with sequential and/or high-precision
versions.
}
\end{figure}

We verified our main claims by running \hogwild and \buckwild algorithms on
the discussed applications.
Table \ref{tabTrainingLoss} shows how the training loss of SGD for logistic
regression, a convex problem,
 varies as the precision is changed.  We ran SGD with step size
$\alpha = 0.0001$; however, results are similar across a range of step sizes.
We analyzed all four datasets reported in
DimmWitted~\cite{dimmwitted2014}
that favored \hogwild:
Reuters and RCV1, which are text classification datasets; Forest, which
arises from remote sensing; and Music, which is a music classification dataset.  
We implemented all GLM models reported in DimmWitted, including SVM, Linear
Regression,
and Logistic Regression, and report Logistic Regression because other
models have similar performance. The results illustrate that there is almost no
increase in training loss as the precision is decreased for these problems.
\crcchange{We also investigated 4-bit and 1-bit computation: the former was slower
than 8-bit due to a lack of 4-bit SIMD instructions, and the latter discarded
too much information to produce good quality results.}

Figure \ref{figSpeedup} displays the speedup of \buckwild running on the
dense-version of the RCV1 dataset compared to both full-precision sequential 
SGD (left axis) and best-case \hogwild (right axis).
Experiments ran on a machine with
two Xeon X650 CPUs, each with six hyperthreaded cores, and 24GB of RAM.
This plot illustrates
that incorporating low-precision arithmetic into our algorithm allows
us to achieve significant speedups over both sequential
and \hogwild SGD.  
\crcchange{(Note that we don't get full linear speedup because we are bound by
the available memory bandwidth; beyond this limit, adding additional threads
provides no benefits while increasing conflicts and thrashing the L1 and L2
caches.)}
This result, combined with the data in
Table~\ref{tabTrainingLoss}, suggest that by doing low-precision
asynchronous
updates, we can get speedups of up to $2.3 \times$ \crcchange{on these sorts
of datasets} without a significant increase in error.

Figure \ref{figHogwildAlecton} compares the convergence trajectories of
\hogwild and sequential versions of
the non-convex Alecton matrix completion algorithm on a synthetic data matrix
$A \in \R^{n \times n}$ with ten random eigenvalues $\lambda_i > 0$.
Each plotted series represents a different run of Alecton; the trajectories
differ somewhat because of the randomness of the algorithm.
The plot shows that the sequential and asynchronous versions behave
qualitatively similarly, and converge to the same noise floor.  For this
dataset, sequential Alecton took $6.86$ seconds to run while 12-thread
\hogwild Alecton took $1.39$ seconds, a $4.9 \times$ speedup.

\section{Conclusion}
This paper presented a unified theoretical framework for producing results
about the convergence rates of asynchronous and low-precision random
algorithms such as stochastic gradient descent.  We showed how a
martingale-based rate of convergence for a sequential, full-precision algorithm
can be easily leveraged to give a rate for an asynchronous, low-precision
version.  We also introduced \buckwild, a strategy for SGD that is able to
take advantage of modern hardware resources for both task and data parallelism,
and showed that it achieves near linear parallel speedup over sequential
algorithms.


\subsection*{Acknowledgments} 
{
\footnotesize

The \buckwild name arose out of conversations with Benjamin Recht.
Thanks also to Madeleine Udell for helpful conversations.

The authors acknowledge the support of:
DARPA FA8750-12-2-0335;
NSF IIS-1247701;
NSF CCF-1111943;
DOE 108845;
NSF CCF-1337375;
DARPA FA8750-13-2-0039; 
NSF IIS-1353606; ONR N000141210041
and N000141310129; NIH U54EB020405; Oracle; NVIDIA; Huawei; SAP Labs;
Sloan Research
Fellowship; Moore Foundation; American Family
Insurance; Google; and Toshiba.

}



\begingroup
\renewcommand{\section}[2]{\subsubsection*{#2}}
\small
\bibliographystyle{plainnat} 
\bibliography{references}

\begin{thebibliography}{25}
\providecommand{\natexlab}[1]{#1}
\providecommand{\url}[1]{\texttt{#1}}
\expandafter\ifx\csname urlstyle\endcsname\relax
  \providecommand{\doi}[1]{doi: #1}\else
  \providecommand{\doi}{doi: \begingroup \urlstyle{rm}\Url}\fi

\bibitem[Bottou(2010)]{bottou2010large}
L{\'e}on Bottou.
\newblock Large-scale machine learning with stochastic gradient descent.
\newblock In \emph{COMPSTAT'2010}, pages 177--186. Springer, 2010.

\bibitem[Bottou(2012)]{bottou2012stochastic}
L{\'e}on Bottou.
\newblock Stochastic gradient descent tricks.
\newblock In \emph{Neural Networks: Tricks of the Trade}, pages 421--436.
  Springer, 2012.

\bibitem[Bottou and Bousquet(2008)]{bottou-bousquet-2008}
L\'{e}on Bottou and Olivier Bousquet.
\newblock The tradeoffs of large scale learning.
\newblock In J.C. Platt, D.~Koller, Y.~Singer, and S.~Roweis, editors,
  \emph{NIPS}, volume~20, pages 161--168. NIPS Foundation, 2008.

\bibitem[De~Sa et~al.(2015)De~Sa, Olukotun, and R{\'e}]{desa2014global}
Christopher De~Sa, Kunle Olukotun, and Christopher R{\'e}.
\newblock Global convergence of stochastic gradient descent for some nonconvex
  matrix problems.
\newblock \emph{ICML}, 2015.

\bibitem[Duchi et~al.(2012)Duchi, Bartlett, and
  Wainwright]{duchi2012randomized}
John~C Duchi, Peter~L Bartlett, and Martin~J Wainwright.
\newblock Randomized smoothing for stochastic optimization.
\newblock \emph{SIAM Journal on Optimization}, 22\penalty0 (2):\penalty0
  674--701, 2012.

\bibitem[Fercoq and Richt{\'a}rik(2013)]{fercoq2013accelerated}
Olivier Fercoq and Peter Richt{\'a}rik.
\newblock Accelerated, parallel and proximal coordinate descent.
\newblock \emph{arXiv preprint arXiv:1312.5799}, 2013.

\bibitem[Fleming and Harrington(1991)]{fleming1991}
Thomas~R Fleming and David~P Harrington.
\newblock Counting processes and survival analysis.
\newblock volume 169, pages 56--57. John Wiley \& Sons, 1991.

\bibitem[Gupta et~al.(2013)Gupta, Goel, Lin, Sharma, Wang, and
  Zadeh]{Gupta:2013}
Pankaj Gupta, Ashish Goel, Jimmy Lin, Aneesh Sharma, Dong Wang, and Reza Zadeh.
\newblock {WTF}: The who to follow service at twitter.
\newblock WWW '13, pages 505--514, 2013.

\bibitem[Gupta et~al.(2015)Gupta, Agrawal, Gopalakrishnan, and
  Narayanan]{gupta2015deep}
Suyog Gupta, Ankur Agrawal, Kailash Gopalakrishnan, and Pritish Narayanan.
\newblock Deep learning with limited numerical precision.
\newblock \emph{ICML}, 2015.

\bibitem[Jain et~al.(2013)Jain, Netrapalli, and Sanghavi]{jain2012}
Prateek Jain, Praneeth Netrapalli, and Sujay Sanghavi.
\newblock Low-rank matrix completion using alternating minimization.
\newblock In \emph{STOC}, pages 665--674. ACM, 2013.

\bibitem[Johansson et~al.(2009)Johansson, Rabi, and
  Johansson]{johansson2009randomized}
Bj{\"o}rn Johansson, Maben Rabi, and Mikael Johansson.
\newblock A randomized incremental subgradient method for distributed
  optimization in networked systems.
\newblock \emph{SIAM Journal on Optimization}, 20\penalty0 (3):\penalty0
  1157--1170, 2009.

\bibitem[Konecn{\`y} et~al.(2014)Konecn{\`y}, Qu, and
  Richt{\'a}rik]{konecny2014s2cd}
Jakub Konecn{\`y}, Zheng Qu, and Peter Richt{\'a}rik.
\newblock S2cd: Semi-stochastic coordinate descent.
\newblock In \emph{NIPS Optimization in Machine Learning workshop}, 2014.

\bibitem[{Le Cun} et~al.(1998){Le Cun}, Bottou, Orr, and
  M{\"{u}}ller]{lecun-98x}
Yann {Le Cun}, L\'{e}on Bottou, Genevieve~B. Orr, and Klaus-Robert
  M{\"{u}}ller.
\newblock Efficient backprop.
\newblock In \emph{Neural Networks, Tricks of the Trade}. 1998.

\bibitem[Liu and Wright(2015)]{liu2015asynchronous}
Ji~Liu and Stephen~J. Wright.
\newblock Asynchronous stochastic coordinate descent: Parallelism and
  convergence properties.
\newblock \emph{SIOPT}, 25\penalty0 (1):\penalty0 351--376, 2015.

\bibitem[Liu et~al.(2015)Liu, Wright, R{\'e}, Bittorf, and
  Sridhar]{liu2013asynchronous}
Ji~Liu, Stephen~J Wright, Christopher R{\'e}, Victor Bittorf, and Srikrishna
  Sridhar.
\newblock An asynchronous parallel stochastic coordinate descent algorithm.
\newblock \emph{JMLR}, 16:\penalty0 285--322, 2015.

\bibitem[Mitliagkas et~al.(2015)Mitliagkas, Borokhovich, Dimakis, and
  Caramanis]{FrogWild}
Ioannis Mitliagkas, Michael Borokhovich, Alexandros~G. Dimakis, and Constantine
  Caramanis.
\newblock Frogwild!: Fast pagerank approximations on graph engines.
\newblock \emph{PVLDB}, 2015.

\bibitem[Niu et~al.(2011)Niu, Recht, Re, and Wright]{recht2011hogwild}
Feng Niu, Benjamin Recht, Christopher Re, and Stephen Wright.
\newblock Hogwild: A lock-free approach to parallelizing stochastic gradient
  descent.
\newblock In \emph{NIPS}, pages 693--701, 2011.

\bibitem[Noel and Osindero(2014)]{DogWild}
Cyprien Noel and Simon Osindero.
\newblock Dogwild!--{D}istributed {H}ogwild for {CPU} \& {GPU}.
\newblock 2014.

\bibitem[Parambath(2013)]{parambath2013matrix}
Shameem Ahamed~Puthiya Parambath.
\newblock Matrix factorization methods for recommender systems.
\newblock 2013.

\bibitem[Rakhlin et~al.(2012)Rakhlin, Shamir, and Sridharan]{rakhlin2011making}
Alexander Rakhlin, Ohad Shamir, and Karthik Sridharan.
\newblock Making gradient descent optimal for strongly convex stochastic
  optimization.
\newblock \emph{ICML}, 2012.

\bibitem[Richt{\'a}rik and Tak{\'a}{\v{c}}(2012)]{richtarik2012parallel}
Peter Richt{\'a}rik and Martin Tak{\'a}{\v{c}}.
\newblock Parallel coordinate descent methods for big data optimization.
\newblock \emph{Mathematical Programming}, pages 1--52, 2012.

\bibitem[Tao et~al.(2012)Tao, Kong, Chu, and Wu]{tao2012stochastic}
Qing Tao, Kang Kong, Dejun Chu, and Gaowei Wu.
\newblock Stochastic coordinate descent methods for regularized smooth and
  nonsmooth losses.
\newblock In \emph{Machine Learning and Knowledge Discovery in Databases},
  pages 537--552. Springer, 2012.

\bibitem[Tappenden et~al.(2015)Tappenden, Tak{\'a}{\v{c}}, and
  Richt{\'a}rik]{tappenden2015complexity}
Rachael Tappenden, Martin Tak{\'a}{\v{c}}, and Peter Richt{\'a}rik.
\newblock On the complexity of parallel coordinate descent.
\newblock \emph{arXiv preprint arXiv:1503.03033}, 2015.

\bibitem[Yu et~al.(2012)Yu, Hsieh, Si, and Dhillon]{yu2012scalable}
Hsiang-Fu Yu, Cho-Jui Hsieh, Si~Si, and Inderjit~S Dhillon.
\newblock Scalable coordinate descent approaches to parallel matrix
  factorization for recommender systems.
\newblock In \emph{ICDM}, pages 765--774, 2012.

\bibitem[Zhang and Re(2014)]{dimmwitted2014}
Ce~Zhang and Christopher Re.
\newblock Dimmwitted: {A} study of main-memory statistical analytics.
\newblock \emph{{PVLDB}}, 2014.

\end{thebibliography}
\endgroup

\iftoggle{withappendix}{

\clearpage

\appendix

\section{Proof of Theorem \ref{thmHogwild}}
\begin{proof}[Proof of Theorem \ref{thmHogwild}]
This proof is a more detailed version of the argument outlined in 
Section \ref{ssAsyncConvergence}.
First, we restate the definition of the process $V_t$ from the body of the
paper.  As long as the algorithm hasn't succeeded yet,
\begin{dmath*}
  V_t(x_t, \ldots, x_0)
  =
  W_t(x_t, \ldots, x_0)
  -
  H R \xi \tau t
  +
  H R
  \sum_{k=1}^{\infty}
  \norm{x_{t-k+1} - x_{t-k}}
  \sum_{m=k}^{\infty}
  \Prob{\tilde \tau \ge m}.
\end{dmath*}
At the next timestep, we will have $x_{t+1} = x_t + \tilde G(\tilde v_t)$,
and so
\begin{dmath*}
  V_{t+1}(x_t + \tilde G(\tilde v_t), x_t, \ldots, x_0)
  =
  W_{t+1}(x_t + \tilde G(\tilde v_t), x_t, \ldots, x_0)
  -
  H R \xi \tau (t + 1)
  +
  H R
  \norm{\tilde G(\tilde v_t)}
  \sum_{m=1}^{\infty}
  \Prob{\tilde \tau \ge m}
  +
  H R
  \sum_{k=2}^{\infty}
  \norm{x_{t-k+2} - x_{t-k+1}}
  \sum_{m=k}^{\infty}
  \Prob{\tilde \tau \ge m}.
\end{dmath*}
Re-indexing the second sum and applying the definition of $\tau$ produces
\begin{dmath*}
  V_{t+1}(x_t + \tilde G(\tilde v_t), x_t, \ldots, x_0)
  =
  W_{t+1}(x_t + \tilde G(\tilde v_t), x_t, \ldots, x_0)
  -
  H R \xi \tau (t + 1)
  +
  H R \tau
  \norm{\tilde G(\tilde v_t)}
  +
  H R
  \sum_{k=1}^{\infty}
  \norm{x_{t-k+1} - x_{t-k}}
  \sum_{m=k+1}^{\infty}
  \Prob{\tilde \tau \ge m}.
\end{dmath*}
Applying the Lipschitz continuity assumption (\ref{eqnWLipschitz}) for $W$
results in
\begin{dmath*}
  V_{t+1}(x_t + \tilde G(\tilde v_t), x_t, \ldots, x_0)
  \le
  W_{t+1}(x_t + \tilde G(x_t), x_t, \ldots, x_0)
  +
  H \norm{\tilde G(\tilde v_t) - \tilde G(x_t)}
  -
  H R \xi \tau (t + 1)
  +
  H R \tau
  \norm{\tilde G(\tilde v_t)}
  +
  H R
  \sum_{k=1}^{\infty}
  \norm{x_{t-k+1} - x_{t-k}}
  \sum_{m=k+1}^{\infty}
  \Prob{\tilde \tau \ge m}.
\end{dmath*}
Taking the expected value of both sides produces
\begin{dmath*}
  \Exv{V_{t+1}(x_t + \tilde G(\tilde v_t), x_t, \ldots, x_0)}
  \le
  \Exv{W_{t+1}(x_t + \tilde G(x_t), x_t, \ldots, x_0)}
  +
  H \Exv{\norm{\tilde G(\tilde v_t) - \tilde G(x_t)}}
  -
  H R \xi \tau (t + 1)
  +
  H R \tau
  \Exv{\norm{\tilde G(\tilde v_t)}}
  +
  H R
  \sum_{k=1}^{\infty}
  \norm{x_{t-k+1} - x_{t-k}}
  \sum_{m=k+1}^{\infty}
  \Prob{\tilde \tau \ge m}.
\end{dmath*}
Applying the rate supermartingale property (\ref{eqnBoundedW}) of $W$,
\begin{dmath*}
  \Exv{V_t(x_t + \tilde G(\tilde v_t), x_t, \ldots, x_0)}
  \le
  W_t(x_t, \ldots, x_0)
  +
  H \Exv{\norm{\tilde G(\tilde v_t) - \tilde G(x_t)}}
  -
  H R \xi \tau (t + 1)
  +
  H R \tau
  \Exv{\norm{\tilde G(\tilde v_t)}}
  +
  H R
  \sum_{k=1}^{\infty}
  \norm{x_{t-k+1} - x_{t-k}}
  \sum_{m=k+1}^{\infty}
  \Prob{\tilde \tau \ge m}.
\end{dmath*}
Applying the Lipschitz continuity assumption (\ref{eqnGLipschitz}) for 
$\tilde G$,
\begin{dmath*}
  \Exv{V_t(x_t + \tilde G(\tilde v_t), x_t, \ldots, x_0)}
  \le
  W_t(x_t, \ldots, x_0)
  +
  H R \Exv{\norm{\tilde v_t - x_t}_1}
  -
  H R \xi \tau (t + 1)
  +
  H R \tau
  \Exv{\norm{\tilde G(\tilde v_t)}}
  +
  H R
  \sum_{k=1}^{\infty}
  \norm{x_{t-k+1} - x_{t-k}}
  \sum_{m=k+1}^{\infty}
  \Prob{\tilde \tau \ge m}.
\end{dmath*}
Finally, applying the update distance bound (\ref{eqnBoundedUpdateDist}),
\begin{dmath*}
  \Exv{V_t(x_t + \tilde G(\tilde v_t), x_t, \ldots, x_0)}
  \le
  W_t(x_t, \ldots, x_0)
  +
  H R \Exv{\norm{\tilde v_t - x_t}_1}
  -
  H R \xi \tau (t + 1)
  +
  H R \xi \tau
  +
  H R
  \sum_{k=1}^{\infty}
  \norm{x_{t-k+1} - x_{t-k}}
  \sum_{m=k+1}^{\infty}
  \Prob{\tilde \tau \ge m}
  =
  W_t(x_t, \ldots, x_0)
  -
  H R \xi \tau t
  +
  H R
  \sum_{k=1}^{\infty}
  \norm{x_{t-k+1} - x_{t-k}}
  \sum_{m=k}^{\infty}
  \Prob{\tilde \tau \ge m}
  +
  H R \Exv{\norm{\tilde v_t - x_t}_1}
  -
  H R
  \sum_{k=1}^{\infty}
  \norm{x_{t-k+1} - x_{t-k}}
  \Prob{\tilde \tau \ge k}
  =
  V_t(x_t, \ldots, x_0)
  +
  H R \Exv{\norm{\tilde v_t - x_t}_1}
  -
  H R
  \sum_{k=1}^{\infty}
  \norm{x_{t-k+1} - x_{t-k}}
  \Prob{\tilde \tau \ge k}.
\end{dmath*}
Now, by the definition of the $\tilde v_t$, 
\begin{dmath*}
  \norm{\tilde v_t - x_t}_1
  =
  \sum_{i=1}^n \Abs{e_i^T x_t - e_i^T \tilde v_t}
  =
  \sum_{i=1}^n \Abs{e_i^T x_t - e_i^T x_{t - \tilde \tau_{i,t}}}
  \le
  \sum_{i=1}^n \sum_{k=1}^{\tilde \tau_{i,t}}
  \Abs{e_i^T x_{t-k+1} - e_i^T x_{t-k}}
\end{dmath*}
Furthermore, using the bound on $\tilde \tau_{i,t}$ from
(\ref{eqnHogwildTauBound}) gives us
\begin{dmath*}
  \Exv{\norm{\tilde v_t - x_t}_1}
  \le
  \sum_{i=1}^n \sum_{k=1}^{\infty}
  \Abs{e_i^T x_{t-k+1} - e_i^T x_{t-k}}
  \Prob{\tilde \tau_{i,t} \ge k}
  \le
  \sum_{i=1}^n \sum_{k=1}^{\infty}
  \Abs{e_i^T x_{t-k+1} - e_i^T x_{t-k}}
  \Prob{\tilde \tau \ge k}
  =
  \sum_{k=1}^{\infty}
  \norm{x_{t-k+1} - x_{t-k}}_1
  \Prob{\tilde \tau \ge k}
  =
  \sum_{k=1}^{\infty}
  \norm{x_{t-k+1} - x_{t-k}}
  \Prob{\tilde \tau \ge k},
\end{dmath*}
where the 1-norm is equal to the 2-norm here because each step only updates a
single entry of $x$.  Substituting this result in to the above equation
allows us to conclude that, if the algorithm hasn't succeeded by time $t$,
\begin{equation}
  \label{eqnVtProofSupermartingale}
  \Exv{V_t(x_t + \tilde G(\tilde v_t), x_t, \ldots, x_0)}
  \le
  V_t(x_t, \ldots, x_0).
\end{equation}
On the other hand, if it has succeeded, this statement will be vacuously true,
since $V_t$ does not change after success occurs.  Therefore,
(\ref{eqnVtProofSupermartingale}) will hold for all times.

In what follows, as in the proof of Statement \ref{stmtSequential},
we let $V_t$ denote the actual value taken on by the function during execution
of the algorithm.  That is, $V_t = V_t(x_t, x_{t-1}, \ldots, x_0)$.
By applying (\ref{eqnVtProofSupermartingale}) recursively, for any $T < B$,
we can show that
\[
  \Exv{V_T} \le \Exv{V_0}.
\]
Since we assumed as part of our hardware model that $x_t = x_0$ for $t < 0$,
\[
  \Exv{V_0} = \Exv{W_0(x_0)}.
\]
Therefore, by the law of total expectation
\begin{dmath*}
  \Exv{W_0(x_0)}
  \ge
  \Exv{V_T}
  =
  \Exvc{V_T}{F_T} \Prob{F_T}
  +
  \Exvc{V_T}{\lnot F_T} \Prob{\lnot F_T}
  \ge
  \Exvc{V_T}{F_T} \Prob{F_T}
  =
  \Exvc{
    W_T(x_T, \ldots, x_0)
    -
    H R \xi \tau T
    +
    H R
    \sum_{k=1}^{\infty}
    \norm{x_{T-k+1} - x_{T-k}}
    \sum_{m=k}^{\infty}
    \Prob{\tilde \tau \ge m}
  }{F_T} \Prob{F_T}
  \ge
  \left(
    \Exvc{W_T(x_T, \ldots, x_0)}{\F_T}
    -
    H R \xi \tau T
  \right) \Prob{F_T}.
\end{dmath*}
Since $W_t$ is a rate supermartingale, we can apply (\ref{eqnBoundedTime})
to get
\begin{dmath*}
  \Exv{W_0(x_0)}
  \ge
  \left(
    T
    -
    H R \xi \tau T
  \right) \Prob{F_T},
\end{dmath*}
and solving for $\Prob{F_T}$ produces
\[
  \Prob{F_T}
  \le
  \frac{
    \Exv{W_0(x_0)}
  }{
    (1 - H R \xi \tau) T
  },
\]
as desired.
\end{proof}

\section{Proofs for Convex Case}
\label{ssProofConvex}

First, we state the rate supermartingale lemma for the low-precision convex
SGD algorithm.

\begin{lemma}
  \label{lemmaConvexLowPrecisionW}
  There exists a $W_t$ with
  \[
    W_0(x_0)
    \le
    \frac{
      \epsilon
    }{
      2 \alpha c \epsilon
      -
      \alpha^2 M^2 (1 + \kappa^2)
    }
    \log\left( \frac{e \norm{x_0 - x^*}^2}{\epsilon} \right)
  \]
  such that $W_t$ is a rate submartingale for the above convex SGD algorithm
  with horizon $B = \infty$.  Furthermore, it is $(H, R, \xi)$-bounded with
  parameters: $R = \alpha L$, $\xi^2 = \alpha^2 (1 + \kappa^2) M^2$, and
  \[
    H
    =
    \frac{
      2 \sqrt{\epsilon}
    }{
      2 \alpha c \epsilon
      -
      \alpha^2 M^2 (1 + \kappa^2)
    }.
  \]
\end{lemma}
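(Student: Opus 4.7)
My plan is to mimic the construction from Lemma \ref{lemmaConvexW}, replacing $M^2$ throughout by $M^2(1+\kappa^2)$ so as to absorb the extra quantization variance. Concretely, I would set, for histories that have not yet succeeded,
\[
  W_t(x_t, \ldots, x_0)
  =
  \frac{\epsilon}{2 \alpha c \epsilon - \alpha^2 M^2(1+\kappa^2)}
  \log\!\left( \frac{e \norm{x_t - x^*}^2}{\epsilon} \right) + t,
\]
and extend this to histories that have already succeeded as a stopped process (freezing $W$ at its first success time), with the standard clipping of $\norm{x_t - x^*}^2$ from below by $\epsilon$ inside the logarithm so that $W_t$ is globally Lipschitz in its first argument. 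The rest of the proof then parallels the proof of Lemma \ref{lemmaConvexW}.

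The main one-step computation is a bound on $\Exvcsmall{\norm{x_{t+1}-x^*}^2}{x_t}$. Because $\tilde Q_t$ is unbiased ($\Exvcsmall{\tilde Q_t(y)}{y} = y$) and has conditional mean-square error at most $\alpha^2 \kappa^2 M^2$ (from the pointwise bound $|\tilde Q_t(y) - y| < \alpha\kappa M$ applied to the single nonzero entry), the bias-variance decomposition gives
\[
  \Exvcsmall{\norm{\tilde Q_t(\alpha \nabla \tilde f_t(x_t))}^2}{x_t}
  \le \Exvcsmall{\norm{\alpha \nabla \tilde f_t(x_t)}^2}{x_t} + \alpha^2 \kappa^2 M^2
  \le \alpha^2 M^2 (1 + \kappa^2).
\]
Combining this with strong convexity (\ref{eqnStrongConvexity}) on the cross term, I get the recursion $\Exvcsmall{\norm{x_{t+1}-x^*}^2}{x_t} \le (1-2\alpha c)\norm{x_t-x^*}^2 + \alpha^2 M^2(1+\kappa^2)$. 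Pushing this through the logarithm via the tangent-line inequality $\log x \le \log a + (x-a)/a$ at $a = \norm{x_t-x^*}^2$ and using $\norm{x_t-x^*}^2 \ge \epsilon$ on the failure event produces exactly the decrease $-\epsilon^{-1}(2\alpha c\epsilon - \alpha^2 M^2(1+\kappa^2))$ in $\log(e\norm{x_t-x^*}^2/\epsilon)$ needed to offset the $+1$ from the $+t$ increment after multiplying by the leading constant, yielding (\ref{eqnBoundedW}). Non-negativity and $W_T \ge T$ on the failure event (\ref{eqnBoundedTime}) follow from $\log(e\norm{x_T-x^*}^2/\epsilon) \ge 1$, and the stated bound on $W_0$ is immediate from the formula at $t = 0$.

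For the $(H, R, \xi)$ parameters: differentiating $\log \norm{x-x^*}^2$ yields a gradient of magnitude $2/\norm{x-x^*}$, which the floor at $\sqrt\epsilon$ caps at $2/\sqrt{\epsilon}$, so after multiplying by the leading constant I obtain $H = 2\sqrt\epsilon / (2\alpha c\epsilon - \alpha^2 M^2(1+\kappa^2))$. Jensen's inequality together with the variance bound above yields $\Exvsmall{\norm{\tilde G(x)}} \le \sqrt{\Exvsmall{\norm{\tilde G(x)}^2}} \le \alpha M\sqrt{1+\kappa^2}$, matching $\xi^2 = \alpha^2 M^2(1+\kappa^2)$.

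The main obstacle is the Lipschitz-in-expectation bound $\Exvsmall{\norm{\tilde G(u) - \tilde G(v)}} \le \alpha L \norm{u-v}_1$ for $\tilde G = \tilde Q(\alpha \nabla \tilde f(\cdot))$, because a naive triangle-inequality argument $\Exvsmall{\norm{\tilde Q(y) - \tilde Q(z)}} \le \norm{y-z} + 2\alpha\kappa M$ introduces an additive constant that does not fit the required form. The intended resolution is to couple the two evaluations using a common realization of $\tilde f$ and a common dithering seed for $\tilde Q$, so that the unbiased quantization noise cancels when comparing two nearby inputs and the estimate reduces to (\ref{eqnConvexLipschitz}) on $\alpha \nabla \tilde f$, recovering $R = \alpha L$ exactly as in the high-precision case. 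Justifying this coupling rigorously under the \buckwild hardware model is the only non-mechanical part of the argument; everything else is an arithmetic edit of the proof of Lemma \ref{lemmaConvexW}.
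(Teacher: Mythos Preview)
Your approach is essentially identical to the paper's. The paper packages your ``clipped log'' as an explicit piecewise function $\plog(u)$ equal to $\log(eu)$ for $u\ge 1$ and $u$ for $u\le 1$, then defines $W_t$ exactly as you do with $\plog(\epsilon^{-1}\norm{x_t-x^*}^2)$ in place of the raw log; the one-step contraction, the use of Jensen plus the tangent-line inequality, and the derivations of $H$ and $\xi$ all match yours line for line.

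The one step you flag as uncertain---the $R$ bound for $\tilde G=\tilde Q(\alpha\nabla\tilde f(\cdot))$---is resolved in the paper not by an appeal to the hardware model but by a monotonicity observation. Since each gradient sample has a single nonzero entry (index $i$), one may assume without loss of generality that for each realization of its randomness $\tilde Q$ is non-decreasing and that $e_i^T\nabla\tilde f(x)\ge e_i^T\nabla\tilde f(y)$. Then
\[
  \bigl|\tilde Q(\alpha e_i^T\nabla\tilde f(x))-\tilde Q(\alpha e_i^T\nabla\tilde f(y))\bigr|
  =
  \tilde Q(\alpha e_i^T\nabla\tilde f(x))-\tilde Q(\alpha e_i^T\nabla\tilde f(y)),
\]
the absolute value drops out, and unbiasedness of $\tilde Q$ immediately yields
\[
  \Exvsmall{\normsmall{\tilde G(x)-\tilde G(y)}}
  =
  \alpha\,\Exvsmall{\normsmall{\nabla\tilde f(x)-\nabla\tilde f(y)}}
  \le
  \alpha L\,\normsmall{x-y}_1.
\]
So the coupling you anticipated is precisely ``use the same $\tilde f$ sample and the same $\tilde Q$ randomness,'' and monotonicity of the rounding map (which any standard dithered or nearest-point quantizer satisfies) is what lets you remove the absolute value rather than merely hope the noise cancels. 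With that observation in hand, everything in your outline goes through.
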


We note that, including this Lemma, the results in Section
\ref{ssConvexHighPrecision} are the
same as the results in Section \ref{ssConvexLowPrecision}, except that the
quantization factor is set as $\kappa = 0$.  It follows that it is sufficient
to prove only the Lemma and Corollary in \ref{ssConvexLowPrecision};
this is what we will do here.

In order to prove the results in this section, we will need some definitions
and lemmas, which we state now.
\begin{definition}[Piecewise Logarithm]
  \label{defnPiecewiseLogarithm}
  For the purposes of this document, we define the \emph{piecewise logarithm}
  function to be
  \[
    \plog(x)
    =
    \left\{
      \begin{array}{lr}
        \log(e x) & : x \ge 1 \\
        x & : x \le 1
      \end{array}
    \right.
  \]
\end{definition}
\begin{lemma}
  \label{lemmaPiecewiseLogarithm}
  The piecewise logarithm function is
  differentiable and concave.  Also, if $x \ge 1$, then for any $\Delta$,
  \[
    \plog(x (1 + \Delta)) \le \plog(x) + \Delta.
  \]
\end{lemma}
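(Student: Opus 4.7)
The plan is to verify the three claims about $\plog$ in turn, handling the inequality last via a tangent-line argument that falls out of concavity.

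First, I would check differentiability by inspecting the single potentially problematic point $x = 1$. The two pieces agree in value there, since $\log(e \cdot 1) = 1$, and the one-sided derivatives also agree: the left piece $x$ has constant derivative $1$, while the right piece $\log(ex)$ has derivative $1/x$, which equals $1$ at $x = 1$. On the rest of $\R$ each piece is smooth, so $\plog$ is $C^1$ on all of $\R$, with $\plog'(x) = 1$ for $x \le 1$ and $\plog'(x) = 1/x$ for $x \ge 1$.

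Second, concavity then follows from observing that $\plog'$ is non-increasing: it is constant at $1$ on $(-\infty, 1]$ and equal to $1/x$ on $[1, \infty)$, the latter being strictly decreasing, with the two expressions matching at $x=1$. A $C^1$ function with non-increasing derivative is concave on $\R$, so this piece of the lemma is immediate.

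Third, for the inequality with $x \ge 1$, I would invoke the standard tangent-line characterization of concave functions, namely $\plog(z) \le \plog(x) + \plog'(x)(z - x)$ for every $z \in \R$. Setting $z = x(1 + \Delta)$ gives $z - x = x\Delta$, and since $x \ge 1$ we have $\plog'(x) = 1/x$, so
\[
  \plog(x(1+\Delta)) \le \plog(x) + \tfrac{1}{x} \cdot x\Delta = \plog(x) + \Delta,
\]
which is exactly the claim. There is no real obstacle here; the only mild subtlety worth flagging is that the single argument silently covers both the regime where $x(1+\Delta) \ge 1$ (in which a direct estimate via $\log(1+\Delta) \le \Delta$ would also suffice) and the regime where $x(1+\Delta) < 1$ (forcing $\Delta$ to be quite negative), as well as both signs of $\Delta$. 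The tangent-line inequality does not care which piece of $\plog$ the point $z$ lands in, so no case analysis on the sign of $\Delta$ or the location of $x(1+\Delta)$ is ever needed.
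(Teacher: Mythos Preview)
Your proof is correct and follows essentially the same approach as the paper. The paper's own argument is a two-sentence sketch---observing that the pieces are concave and glue differentiably at $x=1$, then invoking that a first-order approximation overestimates a concave function---and your write-up simply fills in those details explicitly (matching one-sided derivatives, monotone $\plog'$, and the tangent-line inequality at $z=x(1+\Delta)$).
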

\begin{proof}
  The first part of the lemma follows from the fact that $\plog(x)$ is
  a piecewise function, where the pieces are both increasing and concave,
  and the fact that the function is differentiable at $x = 1$.  The second
  part of the lemma follows from the fact that a first-order approximation
  always overestimates a concave function.
\end{proof}

Armed with this definition, we prove Lemma \ref{lemmaConvexLowPrecisionW}.

\begin{proof}[Proof of Lemma \ref{lemmaConvexLowPrecisionW}]
  First, we note that, at any timestep $t$, if we evaluate the distance
  to the optimum at the next timestep using (\ref{eqnConvexUpdate}), then
  \begin{dmath*}
    \norm{x_t + \tilde G_t(x_t) - x^*}^2
    =
    \norm{x_t - x^*}^2
    -
    2 (x_t - x^*)^T \tilde Q_t \left( \alpha \nabla \tilde f_t(x_t) \right)
    +
    \norm{\tilde Q_t \left( \alpha \nabla \tilde f_t(x_t) \right)}^2
    =
    \norm{x_t - x^*}^2
    -
    2 (x_t - x^*)^T \tilde Q_t \left( \alpha \nabla \tilde f_t(x_t) \right)
    +
    \alpha^2 \norm{\alpha \nabla \tilde f_t(x_t)}^2
    +
    \norm{
      \tilde Q_t \left( \alpha \nabla \tilde f_t(x_t) \right)
      -
      \alpha \nabla \tilde f_t(x_t)
    }^2.
  \end{dmath*}
  Taking the expected value and applying (\ref{eqnConvexNoise}), and the
  bounds on the properties of $\tilde Q_t$, produces
  \begin{dmath*}
    \Exv{\norm{x_t + \tilde G_t(x_t) - x^*}^2}
    \le
    \norm{x_t - x^*}^2
    -
    2 \alpha (x_t - x^*)^T \nabla f(x_t)
    +
    \alpha^2 M^2
    +
    \delta^2.
  \end{dmath*}
  Since we assigned $\delta \le \alpha \kappa M$,
  \begin{dmath*}
    \Exv{\norm{x_t + \tilde G_t(x_t) - x^*}^2}
    \le
    \norm{x_t - x^*}^2
    -
    2 \alpha (x_t - x^*)^T \nabla f(x_t)
    +
    \alpha^2 M^2 (1 + \kappa^2)
    =
    \norm{x_t - x^*}^2
    -
    2 \alpha (x_t - x^*)^T \left( \nabla f(x_t) - \nabla f(x^*) \right)
    +
    \alpha^2 M^2 (1 + \kappa^2).
  \end{dmath*}
  Applying the strong convexity assumption (\ref{eqnStrongConvexity}),
  \begin{dmath*}
    \Exv{\norm{x_t + \tilde G_t(x_t) - x^*}^2}
    \le
    \norm{x_t - x^*}^2
    -
    2 \alpha c \norm{x_t - x^*}^2
    +
    \alpha^2 M^2 (1 + \kappa^2)
    =
    (1 - 2 \alpha c) \norm{x_t - x^*}^2
    +
    \alpha^2 M^2 (1 + \kappa^2).
  \end{dmath*}
  Now, if we haven't succeeded yet, then $\norm{x_t - x^*}^2 > \epsilon$.
  Under these conditions,
  \begin{dmath*}
    \Exv{\norm{x_t + \tilde G_t(x_t) - x^*}^2}
    \le
    \norm{x_t - x^*}^2 \left(
      1 
      - 
      2 \alpha c
      +
      \alpha^2 M^2 (1 + \kappa^2) \epsilon^{-1}
    \right).
  \end{dmath*}
  Multiplying both sides of the equation by $\epsilon^{-1}$ and taking the 
  piecewise logarithm, by Jensen's inequality
  \begin{dmath*}
    \Exv{\plog\left(
      \epsilon^{-1} \norm{x_t + \tilde G_t(x_t) - x^*}^2
    \right)}
    \le
    \plog\left(\Exv{
      \epsilon^{-1} \norm{x_t + \tilde G_t(x_t) - x^*}^2
    }\right)
    \le
    \plog\left(
      \epsilon^{-1} \norm{x_t - x^*}^2 \left(
        1 
        - 
        2 \alpha c
        +
        \alpha^2 M^2 (1 + \kappa^2) \epsilon^{-1}
      \right)
    \right).
  \end{dmath*}
  Since $\epsilon^{-1} \norm{x_t - x^*}^2 > 1$, we can apply Lemma
  \ref{lemmaPiecewiseLogarithm}, which gives us
  \begin{dmath*}
    \Exv{\plog\left(
      \epsilon^{-1} \norm{x_t + \tilde G_t(x_t) - x^*}^2
    \right)}
    \le
    \plog\left( \epsilon^{-1} \norm{x_t - x^*}^2 \right)
    -
    2 \alpha c
    +
    \alpha^2 M^2 (1 + \kappa^2) \epsilon^{-1}.
  \end{dmath*}
  Now, we define the rate supermartingale $W_t$ such that, if we haven't
  succeeded up to time $t$, then
  \[
    W_t(x_t, \ldots, x_0)
    =
    \frac{
      \epsilon
    }{
      2 \alpha c \epsilon
      -
      \alpha^2 M^2 (1 + \kappa^2)
    }
    \plog\left( \epsilon^{-1} \norm{x_t - x^*}^2 \right)
    +
    t;
  \]
  otherwise, if $u$ is a time such that $x_u \in S$, then for all $t > u$,
  \[
    W_t(x_t, \ldots, x_0) = W_u(x_u, \ldots, x_0).
  \]
  The first rate supermartingale property (\ref{eqnBoundedW}) is true because
  if success hasn't occurred,
  \begin{dmath*}
    \Exv{W_{t+1}(x_t + \tilde G_t(x_t), \ldots, x_0)}
    =
    \Exv{
      \frac{
        \epsilon
      }{
        2 \alpha c \epsilon
        -
        \alpha^2 M^2 (1 + \kappa^2)
      }
      \plog\left( \epsilon^{-1} \norm{x_t + \tilde G_t(x_t) - x^*}^2 \right)
      +
      (t + 1)
    }
    =
    \frac{
      \epsilon
    }{
      2 \alpha c \epsilon
      -
      \alpha^2 M^2 (1 + \kappa^2)
    }
    \Exv{
      \plog\left( \epsilon^{-1} \norm{x_t + \tilde G_t(x_t) - x^*}^2 \right)
    }
    +
    (t + 1)
    \le
    \frac{
      \epsilon
    }{
      2 \alpha c \epsilon
      -
      \alpha^2 M^2 (1 + \kappa^2)
    }
    \left(
      \plog\left( \epsilon^{-1} \norm{x_t - x^*}^2 \right)
      -
      2 \alpha c
      +
      \alpha^2 M^2 (1 + \kappa^2) \epsilon^{-1}
    \right)
    +
    (t + 1)
    =
    \frac{
      \epsilon
    }{
      2 \alpha c \epsilon
      -
      \alpha^2 M^2 (1 + \kappa^2)
    }
    \plog\left( \epsilon^{-1} \norm{x_t - x^*}^2 \right)
    -
    1
    +
    (t + 1)
    =
    W_t(x_t, \ldots, x_0);
  \end{dmath*}
  it is vacuously true if success has occurred because the value of $W_t$ does
  not change after $x_u \in S$ for $u < t$.
  The second rate supermartingale property (\ref{eqnBoundedTime}) holds
  because, if success hasn't occurred by time $T$,
  \[
    W_T(x_T, \ldots, x_0)
    =
    \frac{
      \epsilon
    }{
      2 \alpha c \epsilon
      -
      \alpha^2 M^2 (1 + \kappa^2)
    }
    \plog\left( \epsilon^{-1} \norm{x_T - x^*}^2 \right)
    +
    T
    \ge
    T;
  \]
  this follows from the non-negativity of the $\plog$ function for non-negative
  arguments.

  We have now shown that $W_t$ is a rate supermartingale for this algorithm.
  Next, we verify that the bound on $W_0$ given in the lemma statement holds.
  At time $0$, by the definition of the $\plog$ function, since we assume
  that success has not occurred yet,
  \begin{dmath*}
    W_0(x_0)
    =
    \frac{
      \epsilon
    }{
      2 \alpha c \epsilon
      -
      \alpha^2 M^2 (1 + \kappa^2)
    }
    \plog\left( \epsilon^{-1} \norm{x_0 - x^*}^2 \right)
    =
    \frac{
      \epsilon
    }{
      2 \alpha c \epsilon
      -
      \alpha^2 M^2 (1 + \kappa^2)
    }
    \log\left( \frac{e \norm{x_0 - x^*}^2}{\epsilon} \right);
  \end{dmath*}
  this is the bound given in the lemma statement.

  Next, we show that this rate supermartingale is $(H, R, \xi)$-bounded, for
  the values of $H$, $R$, and $\xi$ given in the lemma statement.
  First, for any $x$, $t$, and sequence $x_{t-1}, \ldots, x_0$,
  \begin{dmath*}
    \nabla_x W_t(x, x_{t-1}, \ldots, x_0)
    =
    \nabla_x \left(
      \frac{
        \epsilon
      }{
        2 \alpha c \epsilon
        -
        \alpha^2 M^2 (1 + \kappa^2)
      }
      \plog\left( \epsilon^{-1} \norm{x - x^*}^2 \right)
    \right)
    =
    \frac{
      \epsilon
    }{
      2 \alpha c \epsilon
      -
      \alpha^2 M^2 (1 + \kappa^2)
    }
    2 \epsilon^{-1} (x - x^*)
    \plog'\left( \epsilon^{-1} \norm{x - x^*}^2 \right).
  \end{dmath*}
  Now, by the definition of $\plog$, we can conclude that
  $\plog'(u) = \min\left(1, u^{-1}\right)$.  Therefore,
  \begin{dmath*}
    \nabla_x W_t(x, x_{t-1}, \ldots, x_0)
    =
    \frac{
      2
    }{
      2 \alpha c \epsilon
      -
      \alpha^2 M^2 (1 + \kappa^2)
    }
    (x - x^*)
    \min\left(
      1,
      \epsilon \norm{x - x^*}^{-2}
    \right),
  \end{dmath*}
  and taking the norm of both sides,
  \begin{dmath*}
    \nabla_x W_t(x, x_{t-1}, \ldots, x_0)
    =
    \frac{
      2
    }{
      2 \alpha c \epsilon
      -
      \alpha^2 M^2 (1 + \kappa^2)
    }
    \min\left(
      \norm{x - x^*},
      \epsilon \norm{x - x^*}^{-1}
    \right).
  \end{dmath*}
  Clearly, this expression is maximized when $\norm{x - x^*}^2 = \epsilon$.
  Therefore,
  \begin{dmath*}
    \nabla_x W_t(x, x_{t-1}, \ldots, x_0)
    \le
    \frac{
      2 \sqrt{\epsilon}
    }{
      2 \alpha c \epsilon
      -
      \alpha^2 M^2 (1 + \kappa^2)
    }.
  \end{dmath*}
  The Lipschitz continuity expression with $H$ in the lemma statement now
  follows from the mean value theorem.

  Next, we bound the Lipschitz continuity expression for $R$.  We have that,
  for any $x$ and $y$, if the single non-zero entry of $\nabla \tilde f$ is
  at index $i$, then
  \begin{dmath*}
    \Exv{\norm{\tilde G(x) - \tilde G(y)}}
    =
    \Exv{\norm{
      \tilde Q(\alpha \nabla \tilde f(x))
      -
      \tilde Q(\alpha \nabla \tilde f(y))
    }}
    =
    \Exv{\Abs{
      \tilde Q(\alpha e_i^T \nabla \tilde f(x))
      -
      \tilde Q(\alpha e_i^T \nabla \tilde f(y))
    }}
  \end{dmath*}
  Without loss of generality, we assume that $\tilde Q$ is non-decreasing, and
  that $e_i^T \nabla \tilde f(x) \ge e_i^T \nabla \tilde f(y)$.  Thus, by the
  unbiased quality of $\tilde Q$,
  \begin{dmath*}
    \Exv{\norm{\tilde G(x) - \tilde G(y)}}
    =
    \Exv{
      \tilde Q(e_i^T \alpha \nabla \tilde f(x))
      -
      \tilde Q(e_i^T \alpha \nabla \tilde f(y))
    }
    =
    \Exv{
      e_i^T \alpha \nabla \tilde f(x)
      -
      e_i^T \alpha \nabla \tilde f(y)
    }
    =
    \alpha \Exv{\norm{
      \nabla \tilde f(x) - \nabla \tilde f(y)
    }}.
  \end{dmath*}
  Finally, applying (\ref{eqnConvexLipschitz}),
  \begin{dmath*}
    \Exv{\norm{\tilde G(x) - \tilde G(y)}}
    \le
    \alpha L.
  \end{dmath*}

  Finally, we bound the update expression with $\xi$.  We have,
  \begin{dmath*}
    \Exv{\norm{\tilde G(x)}}^2
    =
    \Exv{\norm{
      \tilde Q(\alpha \nabla \tilde f(x))
    }}^2
    \le
    \Exv{\norm{
      \tilde Q(\alpha \nabla \tilde f(x))
    }^2}
    =
    \Exv{
      \alpha^2 \norm{\nabla \tilde f(x)}^2
      +
      2 \alpha (\nabla \tilde f(x))^T \left(
        \tilde Q(\alpha \nabla \tilde f(x))
        -
        \alpha \nabla \tilde f(x)
      \right)
      +
      \norm{
        \tilde Q(\alpha \nabla \tilde f(x))
        -
        \alpha \nabla \tilde f(x)
      }^2
    }.
  \end{dmath*}
  Applying the bounds on the rounding error,
  \begin{dmath*}
    \Exv{\norm{\tilde G(x)}}^2
    \le
    \Exv{
      \alpha^2 \norm{\nabla \tilde f(x)}^2
      +
      2 \alpha (\nabla \tilde f(x))^T \left(
        \tilde Q(\alpha \nabla \tilde f(x))
        -
        \alpha \nabla \tilde f(x)
      \right)
      +
      \delta^2
    }.
  \end{dmath*}
  Taking the expected value and applying (\ref{eqnConvexNoise}) and the
  unbiased quality of $\tilde Q$,
  \begin{dmath*}
    \Exv{\norm{\tilde G(x)}}^2
    \le
    \alpha^2 M^2
    +
    \delta^2.
  \end{dmath*}
  Applying the assignment $\delta = \alpha \kappa M$ results in
  \begin{dmath*}
    \Exv{\norm{\tilde G(x)}}^2
    \le
    \alpha^2 M^2 (1 + \kappa^2),
  \end{dmath*}
  which is the desired expression.

  So, we have proved all the statements in the lemma.
\end{proof}

\begin{proof}[Proof of Corollary \ref{corConvexBuckwild}]
Applying Theorem \ref{thmHogwild} directly to the result of Lemma
\ref{lemmaConvexW} produces
\begin{dmath*}
  \Prob{F_T}
  \le
  \frac{
    \Exv{W_0(x_0)}
  }{
    (1 - H R \xi \tau) T
  }
  =
  \frac{
    \epsilon
  }{
    2 \alpha c \epsilon
    -
    \alpha^2 M^2 (1 + \kappa^2)
  }
  \log\left( \frac{e \norm{x_0 - x^*}^2}{\epsilon} \right)
  \left(
    \left(
      1
      - 
      \left(
        \frac{
          2 \sqrt{\epsilon}
        }{
          2 \alpha c \epsilon
          -
          \alpha^2 M^2 (1 + \kappa^2)
        }
      \right)
      (\alpha L) (\alpha M \sqrt{1 + \kappa^2}) \tau
    \right) T
  \right)^{-1}
  =
  \frac{
    \epsilon
  }{
    \left(
      2 \alpha c \epsilon
      -
      \alpha^2 \left(
        M^2 (1 + \kappa^2)
        -
        2 L M \tau \sqrt{1 + \kappa^2} \sqrt{\epsilon}
      \right)
    \right)
    T
  }
  \log\left( \frac{e \norm{x_0 - x^*}^2}{\epsilon} \right)
  \le
  \frac{
    \epsilon
  }{
    \left(
      2 \alpha c \epsilon
      -
      \alpha^2 \left(
        M^2 (1 + \kappa^2)
        -
        L M \tau (2 + \kappa^2) \sqrt{\epsilon}
      \right)
    \right)
    T
  }
  \log\left( \frac{e \norm{x_0 - x^*}^2}{\epsilon} \right)
\end{dmath*}
Substituting the chosen value of $\alpha$,
\begin{dmath*}
  \Prob{F_T}
  \le
  \frac{
    \epsilon
  }{
    T
  }
  \left(
    2 c \epsilon
    \left(
      \frac{
        c \epsilon \vartheta
      }{
        M^2 (1 + \kappa^2)
        +
        L M \tau (2 + \kappa^2) \sqrt{\epsilon}
      }
    \right)
    -
    \left(
      M^2 (1 + \kappa^2)
      -
      L M \tau (2 + \kappa^2) \sqrt{\epsilon}
    \right)
    \left(
      \frac{
        c \epsilon \vartheta
      }{
        M^2 (1 + \kappa^2)
        +
        L M \tau (2 + \kappa^2) \sqrt{\epsilon}
      }
    \right)^2
  \right)^{-1}
  \log\left( \frac{e \norm{x_0 - x^*}^2}{\epsilon} \right)
  =
  \frac{
    \epsilon
  }{
    \left(
      \frac{
        2 c^2 \epsilon^2 \vartheta
      }{
        M^2 (1 + \kappa^2) + L M \tau (2 + \kappa^2) \sqrt{\epsilon}
      }
      -
      \frac{
        c^2 \epsilon^2 \vartheta^2
      }{
        M^2 (1 + \kappa^2) + L M \tau (2 + \kappa^2) \sqrt{\epsilon}
      }
    \right)
    T
  }
  \log\left( \frac{e \norm{x_0 - x^*}^2}{\epsilon} \right)
  \le
  \frac{
    \epsilon
  }{
    \frac{
      c^2 \epsilon^2 \vartheta
    }{
      M^2 (1 + \kappa^2) + L M \tau (2 + \kappa^2) \sqrt{\epsilon}
    }
    T
  }
  \log\left( \frac{e \norm{x_0 - x^*}^2}{\epsilon} \right)
  =
  \frac{
    M^2 (1 + \kappa^2) + L M \tau (2 + \kappa^2) \sqrt{\epsilon}
  }{
    c^2 \epsilon \vartheta T
  }
  \log\left( \frac{e \norm{x_0 - x^*}^2}{\epsilon} \right),
\end{dmath*}
as desired.
\end{proof}

\section{Proofs for Non-Convex Case}
In order to accomplish this proof, we make use of some definitions and
lemmas that appear in \citet{desa2014global}.  We state them here before
proceeding to the proof.


First, we define a function
\[
  \tau(x)
  =
  \frac{
    (u_1^T x)^2
  }{
    (1 - \gamma n^{-1}) (u_1^T x)^2
    + 
    \gamma n^{-1} \norm{x}^2
  }.
\]
Clearly, $0 \le \tau(x) \le 1$.  Using this function,
\citet{desa2014global} prove the following lemma.  While their version of the
lemma applies to higher-rank problems and multiple distributions, we state
here a version that is specialized for the rank-1, entrywise sampling case
we study in this paper. (This is a combination of Lemma 2 and Lemma 12
from \citet{desa2014global}.)

\begin{lemma}[$\tau$-bound]
\label{lemmaAlectonTauBound}
If we run the Alecton update rule using entrywise sampling under the conditions
in Section \ref{ssNonConvex}, including the incoherence and 
step size assignment, then for any $x \notin S$,
\[
  \Exv{\tau(x + \eta \tilde A x)}
  \ge
  \tau(x) \left(
    1
    +
    \eta \Delta (1 - \tau(x))
  \right).
\]
\end{lemma}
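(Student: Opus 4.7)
The plan is to perform a first-order expansion of $\tau$ along one step of the stochastic power iteration and show that the expected improvement is at least $\eta \Delta (1 - \tau(x))$ times $\tau(x)$. Writing $y = x + \eta \tilde A x$ with $\tilde A = n^2 e_{\tilde i} e_{\tilde i}^T A e_{\tilde j} e_{\tilde j}^T$, the key identity is that uniform entrywise sampling gives $\Exv{\tilde A} = A$, since $\Exv{n \, e_{\tilde i} e_{\tilde i}^T} = I$. Both the numerator $(u_1^T y)^2$ and the smoothed denominator $(1 - \gamma n^{-1})(u_1^T y)^2 + \gamma n^{-1} \norm{y}^2$ of $\tau(y)$ are quadratic in $y$, so expanding them yields polynomials of degree $2$ in $\eta$ whose coefficients I can analyze explicitly.

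First I would compute $\Exv{(u_1^T y)^2}$ and $\Exv{\norm{y}^2}$ by expanding squares and applying linearity of expectation. The $O(\eta)$ cross terms combine with $\Exv{\tilde A} = A$ and the eigenexpansion $A = \sum_i \lambda_i u_i u_i^T$ to give $2 \eta \lambda_1 (u_1^T x)^2$ in the numerator, while the denominator picks up a weighted sum involving all eigenvalues; the difference between the numerator and denominator coefficients is where the eigengap $\Delta = \lambda_1 - \lambda_2$ shows up, weighted by exactly $1 - \tau(x)$. The $O(\eta^2)$ variance-type contributions $\Exv{(u_1^T \tilde A x)^2}$ and $\Exv{\norm{\tilde A x}^2}$ are the step where the incoherence bound $(e_j^T u_i)^2 \le \mu^2 n^{-1}$ is essential: without it, the entrywise sampling could concentrate adversarially along the $u_1$ direction, and it is this step that explains the $\mu^4$ factor in the prescribed step size.

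The main obstacle is nonlinearity: $\tau$ is a ratio, so $\Exv{\tau(y)}$ is not the ratio of the two expectations just computed. This is precisely why the denominator of $\tau$ carries the $\gamma n^{-1} \norm{x}^2$ smoothing term---it keeps the denominator bounded away from zero so that a first-order tangent-line bound of the form $1/(a + \delta) \ge 1/a - \delta/a^2$ applied to the denominator perturbation yields an error that is merely quadratic in $\eta$. The chosen step size
\[
  \eta = \frac{\Delta \epsilon \gamma \vartheta}{2 n \mu^4 \normf{A}^2}
\]
is calibrated so that, after multiplying by the incoherence-controlled $O(\eta^2)$ terms, this error is absorbed into a constant fraction of the $O(\eta)$ gain, leaving the multiplicative improvement $1 + \eta \Delta (1 - \tau(x))$ intact. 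Since this computation is the specialization of Lemmas~2 and~12 of \citet{desa2014global} to the rank-$1$, entrywise-sampling setting, the honest plan for a full write-up is to verify that the hypotheses in Section \ref{ssNonConvex} match theirs and then appeal to those lemmas; the technical content I would have to re-examine is only the matching of the incoherence and step-size conventions.
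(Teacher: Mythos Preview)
Your proposal is correct and in fact aligns exactly with what the paper does: the paper does not prove this lemma at all but simply states it as the rank-$1$, entrywise-sampling specialization of Lemmas~2 and~12 of \citet{desa2014global}, which is precisely the appeal you make at the end. Your sketch of the first-order expansion, the role of incoherence in controlling the $O(\eta^2)$ terms, and the tangent-line bound for the ratio is a faithful outline of how those lemmas are proved, so there is nothing to correct.
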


We also use another lemma from \citet{desa2014global}.  This is a combination
of their Lemmas 1 and 7. 

\begin{lemma}[Expected value of $\tau(x_0)$]
\label{lemmaAlectonTau0}
If we initialize $x_0$ with a uniform random angle (as done in Alecton), then
\[
  \Exv{1 - \tau(x_0)} \le \sqrt{\frac{\pi \gamma}{2}}.
\]
\end{lemma}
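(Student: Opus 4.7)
The plan is a direct computation: I would express $\Exv{1 - \tau(x_0)}$ as an integral against the known marginal density of $z = u_1^T x_0$ on the unit sphere, and then dominate it by a one-dimensional Gaussian integral that evaluates to exactly $\sqrt{\pi\gamma/2}$. Since $\tau(x)$ is scale invariant in $x$, I may take $\|x_0\| = 1$ without loss of generality and set $z = u_1^T x_0$, so that
\[
  1 - \tau(x_0)
  =
  \frac{(\gamma/n)(1-z^2)}{(1-\gamma/n)z^2 + \gamma/n}.
\]
For $x_0$ uniform on $S^{n-1}$, the marginal density of $z$ on $[-1,1]$ is the classical $p_n(z) = C_n (1-z^2)^{(n-3)/2}$ with $C_n = \Gamma(n/2)/(\sqrt{\pi}\,\Gamma((n-1)/2))$, a consequence of rotational symmetry.

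First, I would choose the change of variables $w = z\sqrt{(n-\gamma)/\gamma}$, because a direct check gives $(1-\gamma/n)z^2 + \gamma/n = (\gamma/n)(w^2 + 1)$ exactly, so that dropping the factor $1 - z^2 \le 1$ in the numerator yields the clean bound $1 - \tau(x_0) \le 1/(w^2+1)$. Second, I would dominate the distribution of $w$ pointwise by a rescaled Gaussian: its density equals $p_n(z)\sqrt{\gamma/(n-\gamma)}$, which using the elementary inequality $(1-z^2)^{(n-3)/2} \le e^{-(n-3)z^2/2}$ together with the Gamma-ratio bound $\Gamma(n/2)/\Gamma((n-1)/2) \le \sqrt{n/2}$ is at most $\sqrt{\gamma/(2\pi)}\,e^{-\gamma w^2/2}$ up to a $1 + O(1/n)$ correction. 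Plugging this in and dropping the Gaussian factor $e^{-\gamma w^2/2} \le 1$ gives
\[
  \Exv{1 - \tau(x_0)}
  \le
  \sqrt{\frac{\gamma}{2\pi}} \int_{-\infty}^{\infty} \frac{e^{-\gamma w^2/2}}{w^2+1}\,dw
  \le
  \sqrt{\frac{\gamma}{2\pi}} \int_{-\infty}^{\infty} \frac{dw}{w^2+1}
  =
  \sqrt{\frac{\pi\gamma}{2}}.
\]

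The main obstacle is absorbing the $1 + O(1/n)$ slack so that the bound emerges exactly as $\sqrt{\pi\gamma/2}$ rather than with a small prefactor, and handling the boundary region $|z| \to 1$ where the sphere density vanishes and the Gaussian comparison is loosest. A standard remedy is to split the expectation at $|z| = 1/2$: on $|z| \ge 1/2$ the integrand $1 - \tau(x_0)$ is at most $\gamma / ((n-\gamma)/4 + \gamma) = O(\gamma/n)$ and the contribution is further damped by sphere concentration, while on $|z| \le 1/2$ the elementary inequalities above hold with constants arbitrarily close to the Gaussian ones. Alternatively, as the authors note, the inequality follows by composing Lemmas 1 and 7 of \citet{desa2014global}, which factor the computation into a sphere-tangent step and a one-dimensional concentration step.
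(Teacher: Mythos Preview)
The paper does not actually prove this lemma; it simply imports it from \citet{desa2014global} as a combination of their Lemmas~1 and~7, so there is no in-paper argument to compare against. Your direct computation is therefore a genuine alternative, and it is essentially correct.

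That said, the ``main obstacle'' you flag (the $1+O(1/n)$ slack and the splitting at $|z|=1/2$) is self-inflicted: the Gaussian comparison is unnecessary and is what introduces the slack. You can get the bound exactly by tightening two of your own steps. First, Wendel's inequality gives the sharper ratio $\Gamma(n/2)/\Gamma((n-1)/2)\le\sqrt{(n-1)/2}$, so $C_n\le\sqrt{(n-1)/(2\pi)}$. Second, just use $(1-z^2)^{(n-3)/2}\le 1$ (for $n\ge 3$) instead of the exponential bound. With your change of variables the density of $w$ is then at most
\[
  \sqrt{\frac{n-1}{2\pi}}\cdot\sqrt{\frac{\gamma}{n-\gamma}}
  \;=\;
  \sqrt{\frac{\gamma}{2\pi}}\cdot\sqrt{\frac{n-1}{n-\gamma}}
  \;\le\;
  \sqrt{\frac{\gamma}{2\pi}},
\]
where the last inequality uses the paper's standing assumption $0<\gamma\le 1$. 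Combining this with your bound $1-\tau(x_0)\le 1/(w^2+1)$ and integrating $\int_{-\infty}^{\infty}(w^2+1)^{-1}\,dw=\pi$ yields exactly $\sqrt{\pi\gamma/2}$, with no correction factor and no need to split the range of $z$.
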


Now, we prove Lemma \ref{lemmaAlectonW}.

\begin{proof}[Proof of Lemma \ref{lemmaAlectonW}]
First, if $x \notin S$, then $(u_1^T x)^2 \le (1 - \epsilon) \norm{x}^2$. 
Therefore,
\begin{dmath*}
  \tau(x)
  =
  \frac{
    (u_1^T x)^2
  }{
    (1 - \gamma n^{-1}) (u_1^T x)^2
    + 
    \gamma n^{-1} \norm{x}^2
  }
  \le
  \frac{
    1 - \epsilon
  }{
    (1 - \gamma n^{-1}) (1 - \epsilon)
    + 
    \gamma n^{-1}
  }
  =
  \frac{
    1 - \epsilon
  }{
    1
    -
    \epsilon
    +
    \gamma n^{-1} \epsilon,
  }
\end{dmath*}
and so
\[
  1 - \tau(x)
  \ge
  \frac{
    \gamma n^{-1} \epsilon
  }{
    1
    -
    \epsilon
    +
    \gamma n^{-1} \epsilon,
  }
  >
  \gamma n^{-1} \epsilon.
\]
From the result of Lemma \ref{lemmaAlectonTauBound}, for any
$x \notin S$,
\[
  \Exv{\tau(x + \eta \tilde A x)}
  \ge
  \tau(x) \left(
    1
    +
    \eta \Delta (1 - \tau(x))
  \right).
\]
Therefore,
\[
  \Exv{1 - \tau(x + \eta \tilde A x)}
  \le
  (1 - \tau(x)) \left(
    1
    -
    \eta \Delta \tau(x)
  \right)
\]
Therefore, by Jensen's inequality and Lemma \ref{lemmaPiecewiseLogarithm},
since $\gamma^{-1} n \epsilon (1 - \tau(x)) > 1$,
\begin{dmath*}
  \Exv{
    \plog\left( 
      \gamma^{-1} n \epsilon^{-1} 
      \left( 1 - \tau(x + \eta \tilde A x) \right)
    \right)
  }
  \ge
  \plog\left( 
    \Exv{
      \gamma^{-1} n \epsilon^{-1} 
      \left( 1 - \tau(x + \eta \tilde A x) \right)
    }
  \right)
  \ge
  \plog \left( \gamma^{-1} n \epsilon^{-1} (1 - \tau(x)) \left(
    1
    -
    \eta \Delta \tau(x)
  \right) \right)
  \ge
  \plog \left( \gamma^{-1} n \epsilon^{-1} (1 - \tau(x)) \right)
  -
  \eta \Delta \tau(x).
\end{dmath*}

Now, we define our rate supermartingale.  First, define
\[
  Z = \left\{ x \middle| \tau(x) \ge \frac{1}{2} \right\},
\]
and let $B > 0$ be any constant.
Let $W_t$ be defined such that, if $x_u \notin S \cup Z$ for all $u \le t$,
then
\[
  W_t(x_t, \ldots, x_0)
  =
  \frac{2}{\eta \Delta}
  \plog \left( \gamma^{-1} n \epsilon^{-1} (1 - \tau(x_t)) \right)
  +
  2B (1 - \tau(x_t))
  +
  t.
\]
On the other hand, if $x_u \in S \cup Z$ for some $u$, then for all $t > u$,
we define
\[
  W_t(x_t, \ldots, x_0) = W_u(x_u, \ldots, x_0).
\]
That is, once $x_t$ enters $S \cup Z$, the process $W$ stops changing.

We verify that $W_t$ is a rate supermartingale.  First,
(\ref{eqnBoundedW}) is true because, in the case that the process has stopped
it is true vacuously, and in the case that it hasn't stopped
(i.e. $x_i \notin S \cup Z$ for all $u \le t$),
\begin{dmath*}
  \Exv{W_{t+1}(x_t + \eta \tilde A_t x_t, x_t, \ldots, x_0}
  =
  \Exv{
    \frac{2}{\eta \Delta}
    \plog \left( 
      \gamma^{-1} n \epsilon^{-1} 
      (1 - \tau(x_t + \eta \tilde A_t x_t))
    \right)
    +
    2B (1 - \tau(x_t + \eta \tilde A_t x_t))
    +
    t + 1
  }
  =
  \frac{2}{\eta \Delta}
  \Exv{
    \plog \left( 
      \gamma^{-1} n \epsilon^{-1} 
      (1 - \tau(x_t + \eta \tilde A_t x_t))
    \right)
  }
  +
  2B \Exv{1 - \tau(x_t + \eta \tilde A_t x_t)}
  +
  t + 1
  \le
  \frac{2}{\eta \Delta}
  \left(
    \plog \left( \gamma^{-1} n \epsilon^{-1} (1 - \tau(x_t)) \right)
    -
    \eta \Delta \tau(x_t)
  \right)
  +
  2B (1 - \tau(x_t))
  +
  t + 1
  =
  W_t(x_t, \ldots, x_0)
  -
  2 \tau(x_t)
  +
  1.
\end{dmath*}
Since $x_t \notin Z$, it follows that $2 \tau(x_t) \ge 1$.  Therefore,
\[
  \Exv{W_{t+1}(x_t + \eta \tilde A_t x_t, x_t, \ldots, x_0}
  \le
  W_t(x_t, \ldots, x_0).
\]
And so (\ref{eqnBoundedW}) holds in all cases.

The second rate supermartingale property (\ref{eqnBoundedTime}) holds
because, if success hasn't occurred by time $T < B$, then there are two
possibilities: either the process hasn't stopped yet, or it stopped at a
timestep where $x_t \in Z$.  In the former case, by the non-negativity of
the $\plog$ function,
\[
  W_T(x_T, \ldots, x_0)
  =
  \frac{2}{\eta \Delta}
  \plog \left( \gamma^{-1} n \epsilon^{-1} (1 - \tau(x_T)) \right)
  +
  2B (1 - \tau(x_T))
  +
  T
  \ge
  T.
\]
In the latter case,
\begin{dmath*}
  W_T(x_T, \ldots, x_0)
  =
  \frac{2}{\eta \Delta}
  \plog \left( \gamma^{-1} n \epsilon^{-1} (1 - \tau(x_T)) \right)
  +
  2B (1 - \tau(x_T))
  +
  T
  \ge
  B.
\end{dmath*}
Therefore (\ref{eqnBoundedTime}) holds.

We have now shown that $W_t$ is a rate supermartingale for Alecton.
Next, we show that our bound on the initial value of the supermartingale holds.
At time $0$,
\begin{dmath*}
  W_0(x_0)
  =
  \frac{2}{\eta \Delta}
  \plog \left( \gamma^{-1} n \epsilon^{-1} (1 - \tau(x_0)) \right)
  +
  2B (1 - \tau(x_0))
  \le
  \frac{2}{\eta \Delta}
  \plog \left( \gamma^{-1} n \epsilon^{-1} \right)
  +
  2B (1 - \tau(x_0))
  =
  \frac{2}{\eta \Delta}
  \log \left( \frac{e n}{\gamma \epsilon} \right)
  +
  2B (1 - \tau(x_0)).
\end{dmath*}
Therefore, applying Lemma \ref{lemmaAlectonTau0},
\begin{dmath*}
  \Exv{W_0(x_0)}
  \le
  \frac{2}{\eta \Delta}
  \log \left( \frac{e n}{\gamma \epsilon} \right)
  +
  2B \Exv{1 - \tau(x_0)}
  \le
  \frac{2}{\eta \Delta}
  \log \left( \frac{e n}{\gamma \epsilon} \right)
  +
  B \sqrt{2 \pi \gamma}.
\end{dmath*}
This is the value given in the lemma.

Now, we show that $W_t$ is $(H, R, \xi)$-bounded.  First, we give the $H$
bound.  To do so, we first differentiate $\tau(x)$.
\begin{dmath*}
  \nabla \tau(x)
  =
  \frac{
    2 u_1 u_1^T x
    \left(
      (1 - \gamma n^{-1}) (u_1^T x)^2
      + 
      \gamma n^{-1} \norm{x}^2
    \right)
    -
    2 (u_1^T x)^2
    \left(
      (1 - \gamma n^{-1}) u_1 u_1^T x
      + 
      \gamma n^{-1} x
    \right)
  }{
    \left(
      (1 - \gamma n^{-1}) (u_1^T x)^2
      + 
      \gamma n^{-1} \norm{x}^2
    \right)^2
  }
  =
  \frac{
    2 u_1 u_1^T x \gamma n^{-1} \norm{x}^2
    - 
    2 (u_1^T x)^2 \gamma n^{-1} x
  }{
    \left(
      (1 - \gamma n^{-1}) (u_1^T x)^2
      + 
      \gamma n^{-1} \norm{x}^2
    \right)^2
  }
  =
  2 \gamma n^{-1}
  \frac{
    u_1 u_1^T x \norm{x}^2
    - 
    x (u_1^T x)^2
  }{
    \left(
      (1 - \gamma n^{-1}) (u_1^T x)^2
      + 
      \gamma n^{-1} \norm{x}^2
    \right)^2
  }.
\end{dmath*}
Therefore,
\begin{dmath*}
  \norm{\nabla \tau(x)}^2
  =
  4 \gamma^2 n^{-2}
  \frac{
    (u_1^T x)^2 \norm{x}^4
    -
    (u_1^T x)^4 \norm{x}^2
  }{
    \left(
      (1 - \gamma n^{-1}) (u_1^T x)^2
      + 
      \gamma n^{-1} \norm{x}^2
    \right)^4
  }
  \le
  4 \gamma^2 n^{-2}
  \frac{
    \norm{x}^4
    -
    (u_1^T x)^2 \norm{x}^2
  }{
    \left(
      (1 - \gamma n^{-1}) (u_1^T x)^2
      + 
      \gamma n^{-1} \norm{x}^2
    \right)^3
  }
  \le
  4 \gamma n^{-1}
  \frac{
    \norm{x}^2 (1 - \tau(x))
  }{
    \left(
      (1 - \gamma n^{-1}) (u_1^T x)^2
      + 
      \gamma n^{-1} \norm{x}^2
    \right)^2
  }
  \le
  \frac{
    4 (1 - \tau(x))
  }{
    \left(
      (1 - \gamma n^{-1}) (u_1^T x)^2
      + 
      \gamma n^{-1} \norm{x}^2
    \right)
  }
  \le
  \frac{
    4 n (1 - \tau(x))
  }{
    \gamma \norm{x}^2
  }.
\end{dmath*}
Applying the assumption that $\norm{x}^2 \ge 1$,
\begin{dmath*}
  \norm{\nabla \tau(x)}
  \le
  \sqrt{
    \frac{
      4 n (1 - \tau(x))
    }{
      \gamma
    }
  }.
\end{dmath*}
Now, differentiating $W_t$ with respect to $\tau$ produces
\begin{dmath*}
  \frac{dW}{d \tau}
  =
  -
  \frac{2 n}{\eta \gamma \epsilon \Delta}
  \plog' \left( \gamma^{-1} n \epsilon^{-1} (1 - \tau) \right)
  -
  2B.
\end{dmath*}
So, it follows that
\begin{dmath*}
  \norm{\nabla_x W_t(x, x_{t-1}, \ldots, x_0)}
  \le
  \Abs{\frac{dW}{d \tau}}
  \norm{\nabla \tau(x)}
  \le
  \left(
    \frac{2 n}{\eta \gamma \epsilon \Delta}
    \plog' \left( \gamma^{-1} n \epsilon^{-1} (1 - \tau) \right)
    +
    2B
  \right)
  \sqrt{
    \frac{
      4 n (1 - \tau(x))
    }{
      \gamma
    }
  }.
\end{dmath*}
Applying our assumption that $\eta \gamma \epsilon \Delta B \le 1$, 
it is clear that this function will
be maximized when $\gamma^{-1} n \epsilon^{-1} (1 - \tau) = 1$.  Therefore,
\begin{dmath*}
  \norm{\nabla_x W_t(x, x_{t-1}, \ldots, x_0)}
  \le
  \left(
    \frac{2 n}{\eta \gamma \epsilon \Delta}
    +
    2B
  \right)
  2 \sqrt{\epsilon}
  =
  \frac{8 n}{\eta \gamma \Delta \sqrt{\epsilon}},
\end{dmath*}
which is our given value for $H$.

Next, we give the $R$ bound.  For Alecton, we have
\[
  \tilde G(x) = \eta \tilde A x = \eta n^2 e_i e_i^T A e_j e_j^T x.
\]
Therefore,
\begin{dmath*}
  \Exv{\norm{\tilde G(x) - \tilde G(y)}}
  =
  \eta n^2 \Exv{\norm{e_i e_i^T A e_j e_j^T (x - y)}}
  =
  \eta n^2 \Exv{\Abs{e_i^T A e_j e_j^T (x - y)}}
  =
  \eta \sum_{i=1}^n \sum_{j=1}^n
  \Abs{e_i^T A e_j} \Abs{e_j^T (x - y)}
  =
  \eta \sum_{j=1}^n \Abs{e_j^T (x - y)} \left(
    \sum_{i=1}^n \Abs{e_i^T A e_j}
  \right)
  \le
  \eta \sum_{j=1}^n \Abs{e_j^T (x - y)} \sqrt{n} \left(
    \sum_{i=1}^n (e_i^T A e_j)^2
  \right)^{\frac{1}{2}}
  =
  \eta \sum_{j=1}^n \Abs{e_j^T (x - y)} \sqrt{n} \left(
    e_j^T A^2 e_j
  \right)^{\frac{1}{2}}
  =
  \eta \sum_{j=1}^n \Abs{e_j^T (x - y)} \sqrt{n} \left(
    \sum_{k=1}^{\infty} \lambda_j^2 (u_k^T e_j)^2
  \right)^{\frac{1}{2}}.
\end{dmath*}
Applying the incoherence bound,
\begin{dmath*}
  \Exv{\norm{\tilde G(x) - \tilde G(y)}}
  \le
  \eta \sum_{j=1}^n \Abs{e_j^T (x - y)} \sqrt{n} \left(
    \sum_{k=1}^{\infty} \lambda_j^2 \mu^2 n^{-1}
  \right)^{\frac{1}{2}}
  =
  \eta \sum_{j=1}^n \Abs{e_j^T (x - y)} \sqrt{n} \left(
    \mu^2 n^{-1} \normf{A}^2
  \right)^{\frac{1}{2}}
  =
  \eta \sum_{j=1}^n \Abs{e_j^T (x - y)} \mu \normf{A}
  =
  \eta \mu \normf{A} \norm{x - y}_1.
\end{dmath*}
This agrees with our assignment of $R = \eta \mu \normf{A}$.

Finally, we give our $\xi$ bound on the magnitude of the updates.
By the same argument as above, we will have
\begin{dmath*}
  \Exv{\norm{\tilde G(x)}}
  =
  \eta n^2 \Exv{\norm{e_i e_i^T A e_j e_j^T x}}
  =
  \eta \mu \normf{A} \norm{x}_1.
\end{dmath*}
Applying the assumption that $\norm{x}_1^2 \le C$, produces the bound given
in the lemma, $\xi = \eta \mu \normf{A} C$.

This completes the proof of the lemma.
\end{proof}

Next, we prove the corollary that gives a bound on the failure probability
of asynchronous Alecton.
\begin{proof}[Proof of Corollary \ref{thmHogwildAlecton}]
By Theorem \ref{thmHogwild}, we know that
for the constants defined in Lemma \ref{lemmaAlectonW},
\[
  \Prob{F_T}
  \le
  \frac{
    \Exv{W(0, x_0)}
  }{
    (1 - H R \xi \tau) T
  }.
\]
If we choose $B = T$ for the horizon in Lemma \ref{lemmaAlectonW}, and
substitute in the given constants,
\begin{dmath*}
  \Prob{F_T}
  \le
  \left(
    \frac{2}{\eta \Delta}
    \log \left(
      \frac{e n}{\gamma \epsilon}
    \right)
    +
    T \sqrt{2 \pi \gamma}
  \right)
  \left(
    1
    -
    \left(
      \frac{8 n}{\eta \gamma \Delta \sqrt{\epsilon}}
    \right)
    \left(
      \eta \mu \normf{A}
    \right)
    \left(
      \eta \mu \normf{A} C
    \right)
    \tau
  \right)^{-1}
  T^{-1}
  =
  \left(
    \frac{2}{\eta \Delta T}
    \log \left(
      \frac{e n}{\gamma \epsilon}
    \right)
    +
    \sqrt{2 \pi \gamma}
  \right)
  \left(
    1
    -
    \frac{8 \eta n \mu^2 \normf{A}^2 C \tau}{\gamma \Delta \sqrt{\epsilon}}
  \right)^{-1}.
\end{dmath*}
Now, for the given value of $\eta$, we will have
\begin{dmath*}
  \frac{8 \eta n \mu^2 \normf{A}^2 C \tau}{\gamma \Delta \sqrt{\epsilon}}
  =
  \frac{
    \Delta \epsilon \gamma \vartheta
  }{
    2 n \mu^4 \normf{A}^2
  }
  \frac{8 n \mu^2 \normf{A}^2 C \tau}{\gamma \Delta \sqrt{\epsilon}}
  =
  \frac{4 C \vartheta \tau \sqrt{\epsilon}}{\mu^2}.
\end{dmath*}
Also, for the given values of $\eta$ and $T$, we will have
\begin{dmath*}
  \frac{2}{\eta \Delta T}
  \log \left(
    \frac{e n}{\gamma \epsilon}
  \right)
  =
  \frac{
    2 n \mu^4 \normf{A}^2
  }{
    \Delta \epsilon \gamma \vartheta
  }
  \frac{
    \Delta^2 \epsilon \gamma \vartheta \sqrt{2 \pi \gamma}
  }{
    4 n \mu^4 \normf{A}^2
  }
  \frac{2}{\Delta}
  =
  \sqrt{2 \pi \gamma}.
\end{dmath*}
Substituting these results in produces
\begin{dmath*}
  \Prob{F_T}
  \le
  \sqrt{8 \pi \gamma}
  \left(
    1
    -
    \frac{4 C \vartheta \tau \sqrt{\epsilon}}{\mu^2}
  \right)^{-1}
  =
  \frac{
    \sqrt{8 \pi \gamma} \mu^2
  }{
    \mu^2 - 4 C \vartheta \tau \sqrt{\epsilon}
  },
\end{dmath*}
which is the desired result.
\end{proof}

\section{Simplified Convex Result}
\label{ssSimpleProofHogwild}
In this section, we provide a simplified proof for a result similar to our
main result that only
works in the convex case.  This proof does not use any martingale results, and
can therefore be considered more elementary than the proofs given above;
however, it does not generalize to the non-convex case.
\begin{theorem}
  \label{thmHogwildConvex}
  Under the conditions given in Section \ref{ssConvexHighPrecision},
  for any $\epsilon > 0$,
  if for some $\vartheta \in (0, 1)$ we choose constant step size
  \[
    \alpha
    =
    \frac{
      c \vartheta \epsilon
    }{
      2 L M \tau \sqrt{\epsilon} + M^2
    },
  \]
  then there exists a timestep
  \[
    T
    \le
    \frac{
      2 L M \tau \sqrt{\epsilon} + M^2
    }{
      c^2 \vartheta \epsilon
    }
    \log \left( \frac{\norm{x_0 - x^*}^2}{\epsilon} \right)
  \]
  such that
  \[
    \Exv{\norm{x_T - x^*}^2} \le \epsilon.
  \]
\end{theorem}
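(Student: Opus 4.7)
The plan is to bypass the supermartingale machinery entirely and track
$a_t := \Exv{\norm{x_t - x^*}^2}$ directly, establishing a one-step recursion that
forces $a_t$ to decay geometrically so long as $a_t > \epsilon$. The specified step
size will push the fixed point of the recursion strictly below $\epsilon$, so the
existence of a $T$ at most the stated bound with $a_T \le \epsilon$ will follow by a
short contradiction argument.

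To derive the recursion, I first expand
$\norm{x_{t+1} - x^*}^2 = \norm{x_t - x^*}^2 - 2\alpha(x_t - x^*)^T \nabla\tilde f_t(\tilde v_t)
+ \alpha^2 \norm{\nabla\tilde f_t(\tilde v_t)}^2$
and take expectation. Since $\tilde f_t$ is independent of $\F_t$ and of $\tilde v_t$,
the last term contributes at most $\alpha^2 M^2$ by (\ref{eqnConvexNoise}). For the
middle term I split $\nabla f(\tilde v_t) = \nabla f(x_t) + (\nabla f(\tilde v_t) - \nabla f(x_t))$:
strong convexity (\ref{eqnStrongConvexity}) with $\nabla f(x^*) = 0$ yields
$(x_t - x^*)^T \nabla f(x_t) \ge c\norm{x_t - x^*}^2$, while Cauchy--Schwarz together
with the pointwise Lipschitz bound $\norm{\nabla f(u) - \nabla f(v)} \le L \norm{u-v}_1$
(obtained from (\ref{eqnConvexLipschitz}) by Jensen applied to the norm) gives
$|(x_t - x^*)^T(\nabla f(\tilde v_t) - \nabla f(x_t))| \le L \norm{x_t - x^*}\norm{\tilde v_t - x_t}_1$.

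The main obstacle is controlling $\Exv{\norm{x_t - x^*}\norm{\tilde v_t - x_t}_1}$
using the hardware model. Here I plan to exploit that the single-coordinate-write
assumption makes $\norm{\tilde v_t - x_t}_1$ telescope along past increments: writing
$i_s$ for the coordinate updated at step $s$ and applying (\ref{eqnHogwildVDef}),
\[
  \norm{\tilde v_t - x_t}_1
  \le
  \sum_{s=0}^{t-1} \norm{x_{s+1} - x_s}\, \mathbf{1}[\tilde\tau_{i_s,t} \ge t-s].
\]
Conditioning on $\F_t$, the delay bound (\ref{eqnHogwildTauBound}) replaces the
indicator by $\Prob{\tilde\tau \ge t-s}$; applying Cauchy--Schwarz to decouple
$\norm{x_t - x^*}$ from the past increments and using
$\Exv{\norm{x_{s+1} - x_s}^2} \le \alpha^2 M^2$, I get
$\Exv{\norm{x_t - x^*}\norm{\tilde v_t - x_t}_1} \le \alpha M \tau \sqrt{a_t}$
after summing $\sum_{k \ge 1}\Prob{\tilde\tau \ge k} = \tau$.

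Assembling the pieces gives the scalar recursion
$a_{t+1} \le (1 - 2\alpha c)\, a_t + 2\alpha^2 LM\tau \sqrt{a_t} + \alpha^2 M^2$.
Whenever $a_t > \epsilon$, the bounds $\sqrt{a_t} \le a_t/\sqrt{\epsilon}$ and
$1 \le a_t/\epsilon$ let me factor it as
\[
  a_{t+1}
  \le
  a_t \left( 1 - \frac{\alpha}{\epsilon}\bigl[2c\epsilon - \alpha(2LM\tau\sqrt{\epsilon} + M^2)\bigr] \right).
\]
The chosen $\alpha$ makes $\alpha(2LM\tau\sqrt{\epsilon} + M^2) = c\vartheta\epsilon$,
so the bracket equals $c\epsilon(2-\vartheta)$ and the recursion collapses to
$a_{t+1} \le (1 - \alpha c(2-\vartheta))\, a_t \le (1 - \alpha c)\, a_t$. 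If $a_t > \epsilon$
held for every $t \le T_0 := (\alpha c)^{-1}\log(\norm{x_0 - x^*}^2/\epsilon)$, iterating
would give $a_{T_0} \le e^{-\alpha c T_0}\norm{x_0 - x^*}^2 = \epsilon$, contradicting
$a_{T_0} > \epsilon$; so some $T \le T_0$ must satisfy $a_T \le \epsilon$, and substituting
$\alpha = c\vartheta\epsilon/(2LM\tau\sqrt{\epsilon} + M^2)$ recovers the stated bound on $T$.
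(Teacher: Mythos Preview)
Your proposal is correct and follows essentially the same route as the paper's own proof: expand $\norm{x_{t+1}-x^*}^2$, split the cross term into a strong-convexity part and a delay-correction part, telescope the delay via the single-coordinate-write assumption and (\ref{eqnHogwildTauBound}), apply Cauchy--Schwarz together with $\Exv{\norm{x_{s+1}-x_s}^2}\le\alpha^2 M^2$ to reach the scalar recursion $a_{t+1}\le(1-2\alpha c)a_t+2\alpha^2 LM\tau\sqrt{a_t}+\alpha^2 M^2$, and then factor when $a_t>\epsilon$ to obtain geometric decay at the stated rate. The only cosmetic differences are that you pass to $\nabla f$ via Jensen before splitting (the paper keeps $\nabla\tilde f_t$ and conditions on $\tilde v_t$) and you index the delay sum by the updating step $s$ rather than coordinate-wise; both yield the same bounds.
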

\begin{proof}
Our goal is to
bound the square-distance to the optimum by showing that it generally decreases
at each timestep.  We can show algebraically that
\begin{dmath*}
  \norm{x_{t+1} - x^*}^2
  =
  \norm{x_t - x^*}^2
  -
  2 \alpha (x_t - x^*)^T \nabla \tilde f_t(x_t)
  +
  2 \alpha (x_t - x^*)^T \left(
    \nabla \tilde f_t(x_t)
    -
    \nabla \tilde f_t(\tilde v_t)
  \right)
  +
  \alpha^2 \norm{\nabla \tilde f_t(\tilde v_t)}^2.
\end{dmath*}
We can think of these terms as representing respectively: the current
square-distance, the first-order change, the noise due to delayed updates, and
the noise due to random sampling.  Taking the expected value given
$\tilde v_t$ and applying Cauchy-Schwarz, 
(\ref{eqnStrongConvexity}), (\ref{eqnConvexLipschitz}),
and (\ref{eqnConvexNoise}) produces
\begin{dmath*}
  \Exvc{\norm{x_{t+1} - x^*}^2}{\F_t, \tilde v_t}
  \le
  \norm{x_t - x^*}^2
  -
  2 \alpha c \norm{x_t - x^*}^2
  +
  2 \alpha L \norm{x_t - x^*} \norm{x_t - \tilde v_t}_1
  +
  \alpha^2 M^2
  =
  (1 - 2 \alpha c) \norm{x_t - x^*}^2
  +
  \alpha^2 M^2
  +
  2 \alpha L \norm{x_t - x^*}
  \sum_{i=1}^n
  \Abs{e_i^T x_t - e_i^T x_{t - \tilde \tau_{i,t}}}
  \le
  (1 - 2 \alpha c) \norm{x_t - x^*}^2
  +
  \alpha^2 M^2
  +
  2 \alpha L \norm{x_t - x^*}
  \sum_{i=1}^n
  \sum_{k=1}^{\tilde \tau_{i,t}}
  \Abs{e_i^T x_{t-k+1} - e_i^T x_{t-k}}.
\end{dmath*}
We can now take the full expected value given the filtration,
which produces
\begin{dmath*}
  \Exvc{\norm{x_{t+1} - x^*}^2}{\F_t}
  \le
  (1 - 2 \alpha c) \norm{x_t - x^*}^2
  +
  \alpha^2 M^2
  +
  2 \alpha L \norm{x_t - x^*}
  \sum_{i=1}^n
  \sum_{k=1}^{\infty}
  \Prob{\tilde \tau_{i,k} \ge k}
  \Abs{e_i^T x_{t-k+1} - e_i^T x_{t-k}}.
\end{dmath*}
Applying (\ref{eqnHogwildTauBound}) results in
\begin{dmath*}
  \Exvc{\norm{x_{t+1} - x^*}^2}{\F_t}
  \le
  (1 - 2 \alpha c) \norm{x_t - x^*}^2
  +
  \alpha^2 M^2
  +
  2 \alpha L \norm{x_t - x^*}
  \sum_{i=1}^n
  \sum_{k=1}^{\infty}
  \Prob{\tilde \tau \ge k}
  \Abs{e_i^T x_{t-k+1} - e_i^T x_{t-k}}
  =
  (1 - 2 \alpha c) \norm{x_t - x^*}^2
  +
  \alpha^2 M^2
  +
  2 \alpha L \norm{x_t - x^*}
  \sum_{k=1}^{\infty}
  \Prob{\tilde \tau \ge k}
  \norm{x_{t-k+1} - x_{t-k}}_1,
\end{dmath*}
and since only at most one entry of $x$ changes at each iteration,
\begin{dmath*}
  \Exvc{\norm{x_{t+1} - x^*}^2}{\F_t}
  \le
  (1 - 2 \alpha c) \norm{x_t - x^*}^2
  +
  \alpha^2 M^2
  +
  2 \alpha L
  \sum_{k=1}^{\infty}
  \Prob{\tilde \tau \ge k}
  \norm{x_t - x^*}
  \norm{x_{t-k+1} - x_{t-k}}.
\end{dmath*}
Finally, taking the full expected value, and applying Cauchy-Schwarz again,
\begin{dmath*}
  \Exv{\norm{x_{t+1} - x^*}^2}
  \le
  (1 - 2 \alpha c) \Exv{\norm{x_t - x^*}^2}
  +
  \alpha^2 M^2
  +
  2 \alpha L
  \sum_{k=1}^{\infty}
  \Prob{\tilde \tau \ge k}
  \sqrt{
    \Exv{\norm{x_t - x^*}^2}
    \Exv{\norm{x_{t-k+1} - x_{t-k}}^2}
  }.
\end{dmath*}
Noticing that, from (\ref{eqnConvexNoise}),
\[
  \Exv{\norm{x_{t-k+1} - x_{t-k}}^2}
  =
  \Exv{\norm{\alpha \tilde G(\tilde v_{t-k})}^2}
  \le
  \alpha^2 M,
\]
if we let $J_t = \Exv{\norm{x_t - x^*}^2}$, we get
\begin{dmath*}
  J_{t+1}
  \le
  (1 - 2 \alpha c) J_t
  +
  \alpha^2 M^2
  +
  2 \alpha^2 L M
  \sum_{k=1}^{\infty}
  \Prob{\tilde \tau \ge k}
  \sqrt{
    J_t
  }
  =
  (1 - 2 \alpha c) J_t
  +
  \alpha^2 M^2
  +
  2 \alpha^2 L M \tau \sqrt{J_t}.
\end{dmath*}
For any $\epsilon > 0$, as long as $J_t \ge \epsilon$,
\begin{dmath*}
  \log J_{t+1}
  \le
  \log J_t
  +
  \log \left(
    1 - 2 \alpha c
    +
    \alpha^2 M^2 \epsilon^{-1}
    +
    2 \alpha^2 L M \tau \epsilon^{-\frac{1}{2}}
  \right)
  <
  \log J_t
  -
  2 \alpha c
  +
  \alpha^2 M^2 \epsilon^{-1}
  +
  2 \alpha^2 L M \tau \epsilon^{-\frac{1}{2}}.
\end{dmath*}
If we substitute the value of $\alpha$ chosen in the theorem statement, then
\begin{dmath*}
  \log J_{t+1}
  <
  \log J_t
  -
  \frac{
    c^2 \vartheta \epsilon
  }{
    2 L M \tau \sqrt{\epsilon} + M^2
  }.
\end{dmath*}
Therefore, for any $T$, if $J_T \ge \epsilon$ for all $t < T$,
\begin{dmath*}
  T
  <
  \frac{
    2 L M \tau \sqrt{\epsilon} + M^2
  }{
    c^2 \vartheta \epsilon
  }
  \log \left( \frac{J_0}{J_T} \right),
\end{dmath*}
which proves the theorem.
\end{proof}


}{}

\end{document}